\newtheorem{thm}{Theorem}
\newcommand\extrafootertext[1]{%
    \bgroup
    \renewcommand\thefootnote{\fnsymbol{footnote}}%
    \renewcommand\thempfootnote{\fnsymbol{mpfootnote}}%
    \footnotetext[0]{#1}%
    \egroup
}
\titleformat{\paragraph}[runin]{\normalfont\normalsize\bfseries}{\theparagraph}{1em}{}
\titlespacing{\paragraph}{0pt}{0.5ex}{3pt}
\DeclareMathOperator*{\argmin}{arg\,min}
\providecommand{\1}{\bm{1}}
\providecommand{\BB}{\bm{B}}
\providecommand{\CC}{\bm{C}}
\providecommand{\ff}{f}
\providecommand{\ff}{\bm{f}}
\providecommand{\xx}{\bm{x}}
\providecommand{\thetav}{\boldsymbol{\theta}}
\providecommand{\Phiv}{\boldsymbol{\Phi}}
\providecommand{\psiv}{\boldsymbol{\psi}}
\providecommand{\muv}{\boldsymbol{\mu}}
\providecommand{\Sigmav}{\boldsymbol{\Sigma}}
\providecommand{\Iv}{\boldsymbol{I}}
\providecommand{\Xiv}{\boldsymbol{X_i}}
\providecommand{\Yiv}{\boldsymbol{y_i}}
\providecommand{\XSv}{{\boldsymbol{X}^*}}
\providecommand{\Xv}{\boldsymbol{X}}
\providecommand{\Wv}{\boldsymbol{W}}
\providecommand{\piv}{\boldsymbol{\pi}}
\providecommand{\cB}{\mathcal{B}}
\providecommand{\cL}{\mathcal{L}}
\providecommand{\cQ}{\mathcal{Q}}
\providecommand{\cR}{\mathcal{R}}
\newtheorem{theorem}{Theorem}
\newtheorem{proposition}[theorem]{Proposition}
\newtheorem{claim}[theorem]{Claim}
\newtheorem{assumption}{Assumption}
\newtheorem{lemma}{Lemma}
\newtheorem{example}[lemma]{Example}
\newtheorem{definition}{Definition}
\newcommand{\ignore}[1]{}
\definecolor{color1}{RGB}{228,26,28}
\definecolor{color2}{RGB}{55,126,184}
\definecolor{color3}{RGB}{77,175,74}
\definecolor{color4}{RGB}{152,78,163}
\definecolor{color5}{RGB}{255,127,0}
\newcommand{\myitem}[1]{%
    \item[\textbf{(#1)}]\protected@edef\@currentlabel{#1}%
}
\renewcommand{\epsilon}{\varepsilon}
\definecolor{DarkGreen}{rgb}{0.1,0.5,0.1}
\definecolor{DarkRed}{rgb}{0.5,0.1,0.1}
\definecolor{DarkBlue}{rgb}{0.1,0.1,0.5}
\newcommand{\bluetext}[1]{\textcolor{blue}{#1}}
\newcommand{\greentext}[1]{\textcolor[HTML]{006400}{#1}}
\title{Collaborative Learning via Prediction Consensus}
\author{
Dongyang Fan$^1$ \quad
Celestine Mendler-Dünner$^{2,3}$ \quad
Martin Jaggi$^1$ \\
$^1$EPFL, Switzerland \\ 
$^2$Max Planck Institute for Intelligent Systems, Tübingen, Germany\\
$^3$ELLIS Institute Tübingen, Germany\\
\texttt{\{dongyang.fan, martin.jaggi\}@epfl.ch}\\
\texttt{celestine@tue.ellis.eu}
}
\begin{document}
\maketitle

\extrafootertext{Code available at \url{https://github.com/fan1dy/collaboration-consensus}}

\begin{abstract}
We consider a collaborative learning setting where the goal of each agent is to improve their own model by leveraging the expertise of collaborators, in addition to their own training data. To facilitate the exchange of expertise among agents, we propose a distillation-based method leveraging shared unlabeled auxiliary data, which is pseudo-labeled by the collective. Central to our method is a trust weighting scheme that serves to adaptively weigh the influence of each collaborator on the pseudo-labels until a consensus on how to label the auxiliary data is reached. We demonstrate empirically that our collaboration scheme is able to significantly boost the performance of individual models in the target domain from which the auxiliary data is sampled. By design, our method adeptly accommodates heterogeneity in model architectures and substantially reduces communication overhead compared to typical collaborative learning methods. At the same time, it can provably mitigate the negative impact of bad models on the collective. 
\end{abstract}

\section{Introduction}
This work considers a decentralized learning setting where each agent locally has access to a labeled dataset and predictive model. The agents may differ in the data distribution they have access to as well as the quality of their local models. In addition, we assume a shared unlabeled dataset $\XSv$ sampled from a target distribution $\mathcal Q$ is available to all agents. 
The central question studied in this work is; \emph{how can agents effectively exchange information to benefit from each other's local expertise in order to improve their predictive performance on the target domain $\mathcal Q$?}

Towards this goal, our work takes inspiration from social science on how a panel of human experts collaborate on a task. Humans typically engage in discourse to exchange information, they share their opinions, and based on how much they trust their peers, each individual will then adjust their subjective belief towards the opinion of peers. 
When repeated, this process gives rise to a dynamic process of consensus finding, as formalized by~\citet{degroot-consensus}. Central to the consensus mechanism of DeGroot is the concept of \emph{trust}. It determines how much individual agents influence each other's opinion, and thus the influence of each agent on the final consensus.

Our proposed algorithm mimics this consensus-finding mechanism in the context of collaborative learning, inspired by recent work~\cite{celestine2021testtime}. In particular, our consensus procedure is aimed at how to label the shared dataset $\XSv$. Therefore, we carefully design a strategy by which each agent determines the trust towards others, given its local information, to optimally leverage each agent's expertise to collectively pseudo-label $\XSv$. This mechanism of knowledge distillation is then combined with techniques from self-training~\cite{farlick67self} in order to transfer the shared knowledge from the pseudo-labels into the local models in an iterative fashion.

Crucial to our approach is that instead of aiming for a consensus on model parameters, such as typically done in federated learning, we leverage the abundance of unlabeled data to enforce consensus in \emph{prediction space} over $\XSv$. 
A key benefit of information exchange in prediction space is a reduction in communication complexity and the ability to control privacy leakage, but also the ability to seamlessly cope with both data heterogeneity and model heterogeneity. 

\paragraph{Problem setup.}
We consider a collaborative learning setup with $N$ agents. Each agent $i\in[N]$ holds local training data, sampled from a local data distribution $\mathcal P_i$. We denote the local training data as $(\Xiv,\Yiv)$, where the matrix $\Xiv$ is composed of $n_i$ local data points and $\Yiv$ denotes the vector of corresponding labels. The number $n_i$ needs not to be the same across agents.
In addition, we assume agents have access to a shared unlabeled dataset $\XSv$ sampled from a target distribution $\cQ$. We use $n^*$ to denote the number of data points in $\XSv$. The ultimate success measure we consider is each agent's prediction performance on the target distribution~$\cQ$. 
We work under the assumption that sharing of raw labeled data is not desirable due to privacy concerns, data ownership, or storage constraints and we wish to keep communication at a minimum to avoid overheads. Our setting recovers the goal of both decentralized and federated learning, with or without personalization~\citep{kairouz2021advances}, when $\cQ$ is defined as the prediction task on the mixture of all local data distributions. We allow agents to differ in their model architectures. We do not require a coordination server but assume agents have all-to-all communication available. 

\paragraph{Contributions.}
The key contributions of our work can be summarized in the following aspects:
\begin{enumerate}[leftmargin=1cm, itemsep=1pt,topsep=0pt,parsep=0pt]
\item We propose a novel collaborative learning algorithm based on prediction-consensus, which effectively addresses statistical and model heterogeneity in the learning process, and provably mitigates the negative impact of low-quality participating agents.
\item Our algorithm is able to significantly reduce communication overhead and privacy-sensitive information sharing in comparison to other collaborative learning methods while achieving superior empirical performance.
\item We show theoretically that consensus can be reached via our algorithm and justify the conditions for good consensus to be achieved.
\end{enumerate}

\section{Related work}
\label{related_work}

In classical \emph{federated learning} setting a central server coordinates local updates toward learning a global model. Local nodes upload gradients or model parameters, instead of data itself, to maintain a certain level of privacy. \citet{McMahanMRA16} describe the classic FedAvg algorithm.  Follow-up works predominantly focus on addressing challenges from non-i.i.d.~local data~\citep{fed-noniid, li2020federated, karimireddy2021scaffold} and robustness towards Byzantine attacks~\citep{cao2021fltrust, karimireddy2022byzantinerobust}. Apart from communicating gradients or model parameters, several works discuss alternatives to allow for heterogeneous model architectures. These methods are based on variants of model distillation~\citep{lin2021ensemble, zhu2021datafree}, reaching an agreement in the representations space~\citep{pmlr-v162-makhija22a} or output space~\citep{abourayya2022aimhi}. Similar to our work, both assume access to a shared unlabeled dataset, but they determine agreement on the unlabeled data using naive averaging, while we aim to account for heterogeneity in model and data quality through trust weighting. 

In contrast to federated learning, the \emph{fully decentralized learning} setting does not assume the existence of a central server. Instead, decentralized schemes such as gossip averaging are used to aggregate local information across agents~\citep{bellet2018personalized,koloskova2021unified}. Despite the lack of a global state, such methods can provably converge to the desired global solution, leading to a gradual consensus among individual models~\citep{kairouz2021advances}. In this context, \citet{dandi2022dataheterogeneityaware,bars2023topology} optimize the communication topology to adapt to data heterogeneity but do not offer any collaborator selection mechanisms. \citet{bellet2018personalized} allows personalized models on each agent, but assumes prior information about task-relatedness, as opposed to learned selection.
\emph{Gossip algorithms} typically assume a fixed gossip mixing matrix given by e.g. physical connections of nodes \citep{xiaoboyd-davg,boyd-gossip,assran2019stochastic}. These approaches fail to consider data-dependent communication as with task similarities and node qualities. Several recent works have addressed this issue by proposing alternative methods that consider these factors. Notably, \citet{li-l2c2022} directly optimizes the mixing weights by minimizing the local validation loss per node, which requires labeled validation sets. \citet{sui2022friends} uses the E-step of EM algorithm to estimate the importance of other agents to one specific agent $i$, by evaluating the accuracy of other agents' models on the local data of agent $i$. This way of trust computation does not allow the algorithm to be applied to target distributions that differ from the local distribution, differentiating it from our work. Moreover, for both \citet{li-l2c2022,sui2022friends} the aggregation is performed in the gradient space, therefore not allowing heterogeneous models. 

Our work relates to \emph{semi-supervised learning} as it involves partially unlabeled data. Most relevant are self-training methods~\cite{Lee2013PseudoLabelT} that first train a model using labeled data, then use the trained model to give pseudo-labels to unlabeled data. The pseudo-labels can further be fed back to the training loop to attain a better model. \citet{wei2022theoretical} shows that under expansion and separation assumptions, self-training with input consistency regularization can achieve high accuracy with respect to ground-truth labels. When more than one learner is involved, co-training~\cite{DBLP:conf/colt/BlumM98} appears as an extension to self-training, benefiting from the knowledge of learners from independent views in labeling a set of unlabeled data. \citet{diao2022semifl} incorporates semi-supervised learning into federated learning. In a setting where agents are with unlabeled data and the center server is with labeled data, experimental studies demonstrate that the performance of a labeled server is significantly improved with
unlabeled clients. \citet{farina2021collective} presented a collective learning framework for distributed semi-supervised learning, where they combine predictions on a shared dataset via weights evaluated from local models' performances on local validation datasets. While their algorithm bears similarities to ours, it is important to note that it is exclusively tailored to scenarios in which the target domain matches the global distribution. In a similar spirit, we want to leverage unlabeled data in a fully decentralized setting.

Finally, \citet{celestine2021testtime} have previously formalized collective prediction as a dynamic \emph{consensus finding procedure}. They demonstrated that such an approach can lead to significant gains over naive model averaging. We leverage these insights and extend their approach from test-time prediction to collaborative model training.

\section{Method description}

Our proposed method is designed to take advantage of shared unlabeled data in the context of collaborative learning through knowledge distillation. Therefore, it emulates human opinion dynamics to collectively pseudo-label the shared auxiliary data. These labels are then incorporated in the local model update steps towards collectively improving the performance on the data distribution from which the shared data is sampled. 

\subsection{Collective pseudo-labeling}

To describe the pseudo labeling step, let us use $\ff_{\thetav_i}$ to denote the local model of agent $i\in[N]$ parameterized by $\thetav_i$. We write $\hat {\boldsymbol{y}}_i= \ff_{\thetav_i}(\XSv)$ to denote the predictions of agent $i$ on the auxiliary data $\XSv$. Agents share these predictions with their peers. Naturally, the individual models may differ in these predictions and it is a priori unclear which model is most accurate, as ground truth labels of the auxiliary data are not available. To combine the predictions into pseudo-labels for $\XSv$, each agent locally decides how to weigh other agents' predictions by estimating their respective expertise on the target task. We refer to these weights as trust scores and we use $w_{ij}$ to denote the trust of agent $i$ towards the predictions of agent $j$. It's worth noting that the trust between agents is not necessarily mutual, i.e., can be asymmetrical: agent~$i$ can trust agent $j$ without agent $j$ necessarily trusting agent $i$ back. We use $\boldsymbol W$ to denote the matrix of trust scores. Given the trust scores, agent $i$ uses the following pseudo labels for the auxiliary data:
\begin{equation} \psiv_i= \sum_j w_{ij} \hat {\boldsymbol{y}}_j.
\end{equation}
Trust weights are determined locally by each agent based on query access to other agents' predictions and they are refined iteratively throughout training as models are being updated. The adaptive weight computation will be detailed in Section~\ref{sec:trust-update-schemes}.

\subsection{Collaborative learning from pseudo labels}

In the second step, the proxy labels for the auxiliary data are used to augment local model training. Therefore, in each step, the local optimization problem is augmented by a disagreement loss, and the new objective is given by 
\begin{equation}
\label{step-local-update}
\cL(\ff_{\thetav_i}(\Xiv),\Yiv)+\lambda \mathrm{dist}(\ff_{\thetav_i}(\XSv),\psiv_{i})
\end{equation}
where $\ff_{\thetav_i}(\bm{X})$ denotes the vector of agent $i$'s predictions on the dataset $\Xv$, $\cL$ is the local training loss and~$\mathrm{dist}(\cdot)$ is a disagreement measure. We choose $l_2$ distance for the disagreement measure in the regression case and cross-entropy for the classification case. $\lambda>0$ is a trade-off hyperparameter that weighs the local loss and the cost of disagreement. This objective adheres to a conventional semi-supervised learning approach, however, we generate pseudo-labels in a trust-based collective manner.

To iteratively refine the local models in the spirit of self-training, the pseudo labeling step and the local training step are performed in an alternating fashion as described in Algorithm~\ref{alg:cap}. Starting from pre-trained models $\thetav_i^{(0)}$, in each round $t \in \{1,..,T\}$ model predictions on the auxiliary data are shared and then each agent aggregates them into a set of pseudo labels to augment local data and perform an update step. 

Our algorithm is motivated conceptually by co-training~\cite{DBLP:conf/colt/BlumM98} where it was demonstrated that unlabeled data can be used to augment labeled data to boost model performance. Moreover, agents communicate by broadcasting predictions, thus the communication cost of transmitting predictions is significantly lower than that of sharing model weights. Moreover, the algorithm can be extended to use the same pseudo-labels $\psiv_i$ for multiple local epochs to further reduce the communication burden.

\usetagform{default}
\begin{algorithm}[t!]
\caption{Pseudo code of our proposed algorithm}\label{alg:cap}
\begin{algorithmic}
\State \textbf{Input:} For each agent $i\in[N]$ we are given a local model $\thetav_i^{(0)}$, a labeled local dataset $(\Xiv,\Yiv)$, and unlabeled shared data $\XSv$. \vspace{2mm}
\For {$t = {1,...,T}$} 
\State Each node $i\in [N]$ broadcasts their soft labels $\boldsymbol{\hat y }_i^{(t-1)}=\ff_{\thetav_i^{(t-1)}}(\XSv)$ to all other nodes 
\State \textbf{in parallel for} each agent $i$ \textbf{do}
\begin{itemize}[leftmargin=2cm, itemsep=0pt,topsep=0pt,parsep=0pt]
\item Calculate pairwise trust score $w_{ij}^{(t)} (j \in [N])$, based on the received soft decisions using methods provided in Section \ref{sec:trust-update-schemes}
\item Get pseudo-labels on $\XSv$ from collaborators: $\psiv_{i}^{(t)}=\sum_{j}w_{ij}^{(t)} \boldsymbol{\hat y}_{j}^{(t-1)}$
\item Do local training with collaborative disagreement loss
\begin{equation}
\thetav_i^{(t)} \in \argmin_{\thetav} \; \cL(\ff_{\thetav}(\Xiv),\Yiv)+\lambda \mathrm{dist}(\ff_{\thetav}(\XSv),\psiv_{i}^{(t)})
\label{eq:disloss}
\end{equation}
\end{itemize}
\EndFor
\end{algorithmic}
\end{algorithm}

\subsection{Conditions for consensus}
\label{consensus-reaching}
We study under what conditions Algorithm~\ref{alg:cap} will reach a consensus among agents on how to label the auxiliary data. For the analysis, we focus on the over-parameterized regime~\footnote{We say a model is over-parameterized if its training error can reach zero. Over-parameterization is a reasonable assumption in the deep learning regime.} and we make the following assumption on the local data distributions: 

\begin{assumption}
\label{assump-covariate-shift}
There is no concept shift between the local data distributions and the target domain $\cQ$ from which the shared data is sampled, i.e., $\mathcal P_i(Y|X\!=\!x)=\mathcal Q(Y|X\!=\!x)$ for all $i \in [N]$ and $x\in\mathrm{supp}(\cQ)$.
\end{assumption}

Together with over-parameterization this assumption implies that the minimizer of the objective specified in \eqref{step-local-update} can always reach zero loss. 
Further, this allows us to model the update of agents' predictions on $\XSv$ as a Markov process where the state transition matrix corresponds to the trust matrix $\Wv^{(t)}$.
Therefore, it is convenient to write the update of the predictions on $\XSv$ performed by the algorithm in matrix form, as $\boldsymbol{\Psi}^{(t)}= [\boldsymbol{\hat y }_1^{(t)},..,\boldsymbol{\hat y }_N^{(t)}]$. Adopting this notation we have for $t\geq 1$
\begin{equation}
\label{eq-MK-update}
\boldsymbol{\Psi}^{(t)} = \Wv^{(t)}\boldsymbol{\Psi}^{(t-1)}=\Wv^{(t)}\Wv^{(t-1)}...\Wv^{(1)} \boldsymbol{\Psi}^{(0)} \ .
\end{equation}

It can be shown that under weak conditions on $\Wv^{(t)}$, a consensus will be reached by our algorithm. 

\begin{theorem}[Consensus on predictions]
\label{Theorem-consensus-on-preds}
Assume all agents' models are over-parameterized and the data distributions satisfy Assumption~\ref{assump-covariate-shift}. Then, for $t\rightarrow \infty$ Algorithm~\ref{alg:cap} converges to a consensus among the local models on the predictions on $\XSv$, that is, %
\begin{equation}
\bm{\psi_i}^{(t)} =\bm{\psi_j}^{(t)} \quad \forall i \neq j,\: 
\end{equation}
as long as $\Wv^{(t)}$ is row-stochastic and positive for any $t\geq 0$. 
\end{theorem}

The proof is given in Appendix \ref{Appendix-proof-of-the-main-thm} and the main insight is that as long as $\Wv$ is row-stochastic and positive (that is $\sum_j w_{ij}=1$ for any row $i$, and $w_{ij} > 0$),
the product of any $\Wv^{(t)}$'s is stochastic, irreducible, and aperiodic, and this leads to the differences between rows in $\boldsymbol{\Psi}^{(t)}$ vanishing in time.
Together with no concept shift, over-parameterization is important to guarantee that models can fit the consensus predictions on the unlabeled data while at the same time minimizing local losses.

\subsection{Information sharing in prediction space}
\label{section-polynomial-regression}
A key feature of our method is that agents do not share model parameters, but they communicate by exchanging prediction queries. This implies that if Algorithm~\ref{alg:cap} achieves a consensus this does not imply that they have learned the same model, or that they agree on predictions outside $\XSv$. 
We illustrate this difference between information sharing in prediction space and parameter space with a simple example. Therefore, we construct an over-parameterized problem. We generate local data using cubic regression models with additive i.i.d. noise in the output, as shown in Fig.~\ref{fig:polynomial-example}. We apply the optimal trust weighting scheme that can be computed in closed form in this example. Then, each agent fits a polynomial regression of degree~4, which leads to \emph{over-parameterization} of the model to fit the data. Full details of our example are given in Appendix~\ref{details-on-polynomial-example}. We refer to the work of \cite{celestine2021testtime} for a similar setting with under-parameterized models. Here we note the most interesting observations in the over-parameterized regime. 

\begin{figure}[h]
\centering
\begin{subfigure}[b]{0.18\linewidth}
\centering
\includegraphics[width=\textwidth]{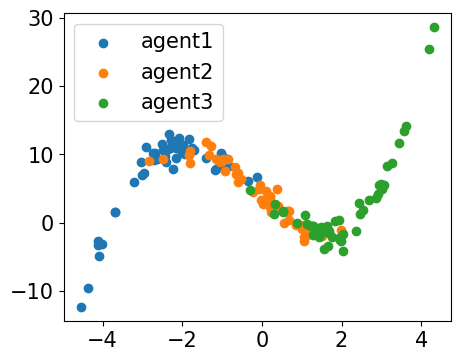}
\caption{}
\label{local-data-distr}
\end{subfigure}
\begin{subfigure}[b]{0.19\linewidth}
\centering
\includegraphics[width=\textwidth]{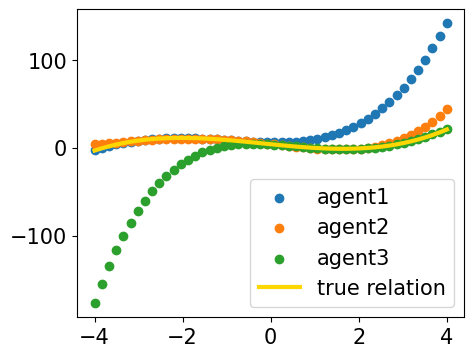}
\caption{}
\label{subfigure: initial fit}
\end{subfigure}
\begin{subfigure}[b]{0.18\linewidth}
\centering
\includegraphics[width=\textwidth]{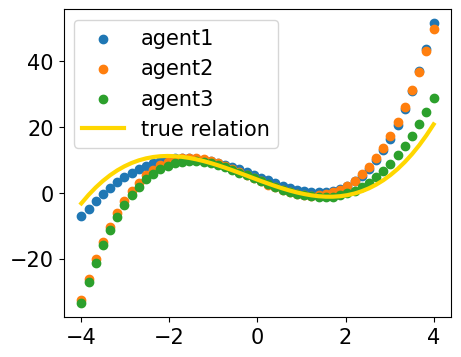}
\caption{}
\label{subfig: round 5}
\end{subfigure}
\begin{subfigure}[b]{0.17\linewidth}
\centering
\includegraphics[width=\textwidth]{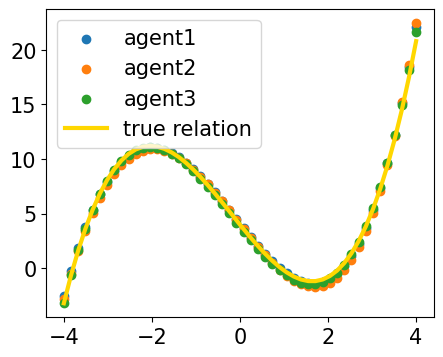}
\caption{}
\label{subfig: round 20}
\end{subfigure}
\begin{subfigure}[b]{0.19\linewidth}
\centering
\includegraphics[width=\textwidth]{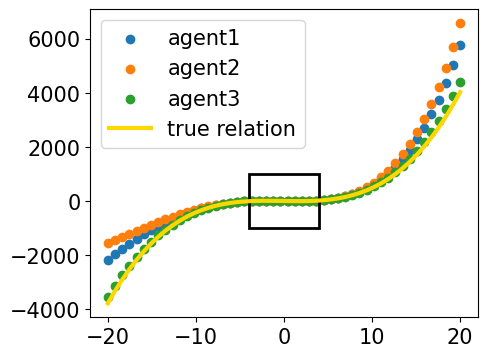}
\caption{}
\label{disagreement-on-parameter-space}
\end{subfigure}
\caption{Local data distributions are shown in (a), and the initial fit on local data is shown in (b). (c) and (d) are predictions on $\XSv$ after 5 rounds and 20 rounds of our algorithm update respectively. (e) is the comparison of model fits in a larger range. (d) is zoom-in of the rectangular area of (e).}
\label{fig:polynomial-example}
\end{figure}

First, we observe that for $T\geq20$ the three agents reach a consensus on the predictions of $\XSv$. However, the model parameters are not the same across the agents, as depicted in the rightmost panel. 
Further, considering the properties of the algorithm across rounds and the predictions in different regions of the input space in more detail, the following desirable behaviors are observed: 
\begin{enumerate}[label=\alph*)]
\setlength\itemsep{0em}
\item  In data-rich regions, agent $i$ fits the local data more accurately and moves pseudo-labels closer to its own predictions.
\item In data-scarce regions, agent $i$ only updates its model parameters to fit the pseudo-labels.
\item When local loss minimization and prediction consensus can be achieved at the same time, agents can arrive at models with a perfect agreement in the target prediction space.
\end{enumerate}

\section{Design of trust weights}\vspace{-1mm}
\label{sec:trust-update-schemes}

In Section~\ref{consensus-reaching} we have shown that our algorithm is guaranteed to reach consensus on $\XSv$ under weak assumptions on the trust matrix $\Wv^{(t)}$. In this section, we discuss how to design $\Wv^{(t)}$ to ensure consensus and encourage that the achieved consensus also leads to a high-quality labeling of $\XSv$. 

Therefore, we focus on multi-class classification. We let $\ff_{\thetav_i}(\xx)$ denote the class probabilities obtained using model $\thetav_i$ for a datapoint $\xx$. We choose the cross-entropy measure $\mathcal{H}(\cdot,\cdot)$ to define the agreement loss function~$\mathrm{dist}(\cdot,\cdot)$ in \eqref{eq:disloss}. If $\Xv=[\xx_1,..,\xx_n]^\top$, then $\ff_{\thetav}(\Xv) = [\ff_{\thetav}(\xx_1),..,\ff_{\thetav}(\xx_n)]^\top \in \mathbb{R}^{n\times C}$, where $n$ is the number of samples in $\Xv$ and $C$ is the number of classes.

\subsection{Trust evaluation through self-confidence}
\label{trust-update-self-confi}
The quality of the local models could differ due to various factors, such as the amount of labeled data available during training, due to the expressivity of the local model, the training algorithm, or due to the relevance of the local data for the target task of labeling $\cQ$. 
Thus, a desirable property of the consensus solution is that malicious agents, or agents with low-quality models contribute less to the pseudo-labeling than agents with better models.

\citet{trustworthy-consensus2022} differentiate between malicious and non-malicious agents and they discuss the concept of trustworthy consensus, where only non-malicious agents contribute to the consensus. 
In contrast to prior work, we do not aim for trustworthy agents to contribute equally. Instead, we specifically want consensus to come from potentially \emph{unequal} contribution of all agents, weighted according to their relevance. We allow for the trust matrix to be asymmetric. All agents determine trust from information given locally to the respective agent, which differs across agents. Central to any such strategy is that the capabilities of models on $\cQ$ can be estimated appropriately.  In the following, we discuss a strategy of how to determine trust from local data and prediction queries to other models.

As no label information on $\XSv$ is available to evaluate trust, it is natural to use agents' own predictions on $\XSv$ as a local reference point. 
Then, each agent distributes their trust towards other agents based on the alignment of their predictions. We use weighted pairwise cosine similarity as a measure of alignment which motivates the following trust weight calculation:
\begin{equation}
\label{w-calculation-1}
w_{ij}^{(t)} = \frac {\gamma_{ij}^{(t)}}{\sum_j \gamma_{ij}^{(t)} }\;\;\text { with }\;\;
\gamma_{ij}^{(t)}= \frac 1 {n^*} \sum_{\xx \in \XSv} {\beta_i^{(t)}(\xx)} \frac{\left\langle \ff_{\thetav_i^{(t-1)}}(\xx),\ff_{\thetav_j^{(t-1)}}(\xx)\right\rangle}{\|\ff_{\thetav_i^{(t-1)}}(\xx)\|_2\|\ff_{\thetav_j^{(t-1)}}(\xx)\|_2}\,.
\end{equation}
The inclusion of the weighting factor $\beta_i^{(t)}(\xx)$ and how to choose it will be discussed in Section \ref{The design of trust matrix}.

\paragraph{Self-confident trust.}
\label{Self-
confident trust can lead to desired consensus}
Naturally, pairwise cosine similarity leads to a trust matrix that has diagonal entries being the highest value among each row. We call this property \emph{self-confident}, as each agent trusts itself the most.
We now demonstrate that this property is not particularly restrictive. Even if constraining trust matrix to be self-confident, it is still possible to design such a matrix that facilitates any consensus. The proof is given in Appendix~\ref{appendix-proof-spiky-diagonals}. 

\begin{proposition}
\label{claim-spiky-diagonal}
For any given consensus distribution $\bm{\pi}$, it is always possible to find a trust matrix $\Wv$ that leads to it, which is both row stochastic and self-confident.
 \end{proposition}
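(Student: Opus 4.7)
The plan is to give a one-parameter explicit construction and verify the three properties in turn: row-stochasticity, self-confidence, and having $\bm{\pi}$ as the induced consensus weights. Recall from the proof strategy of Theorem~\ref{Theorem-consensus-on-preds} that, for a static row-stochastic (positive) $\Wv$, the iteration~\eqref{eq-MK-update} drives $\Psi^{(t)} \to \1\,\bm{\pi}^\top \Psi^{(0)}$, where $\bm{\pi}^\top$ is the unique left stationary vector of $\Wv$. So realizing a prescribed $\bm{\pi}$ as the consensus distribution reduces to the algebraic task of building a row-stochastic, self-confident $\Wv$ whose left Perron vector is $\bm{\pi}$.

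The construction I would propose is the rank-one-plus-identity family
\[
  \Wv \,:=\, \alpha\, \1\bm{\pi}^\top + (1-\alpha)\, \mI,
\]
with free parameter $\alpha \in (0, 1/2]$. Entrywise this gives $w_{ij} = \alpha \pi_j$ for $i \neq j$ and $w_{ii} = \alpha \pi_i + (1-\alpha)$. Row-stochasticity is immediate since each row sums to $\alpha\, \bm{\pi}^\top\1 + (1-\alpha) = 1$, and the stationarity check is the one-liner $\bm{\pi}^\top \Wv = \alpha(\bm{\pi}^\top\1)\bm{\pi}^\top + (1-\alpha)\bm{\pi}^\top = \bm{\pi}^\top$, using $\bm{\pi}^\top \1 = 1$.

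The only step where the argument actually bites is self-confidence, $w_{ii} \geq w_{ij}$, which rearranges to $1 - \alpha \geq \alpha(\pi_j - \pi_i)$ for every pair $(i,j)$. Since $\pi_j - \pi_i \leq 1$ in the worst case, any $\alpha \leq 1/2$ suffices, and any $\alpha < 1/2$ gives strict self-confidence. This is the main (and only) place the free parameter has to be controlled, and it is what motivates the upper bound $\alpha \leq 1/2$ in the construction.

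To additionally invoke Theorem~\ref{Theorem-consensus-on-preds} and so actually produce the consensus, one needs the positivity in Assumption~\ref{assump-positivity of W}. If $\bm{\pi}$ has full support the off-diagonal entries $\alpha \pi_j$ are already strictly positive; when some coordinate of $\bm{\pi}$ vanishes I would simply replace $\bm{\pi}$ by the mixture $(1-\epsilon)\bm{\pi} + \epsilon\,\tfrac{1}{N}\1$ for arbitrarily small $\epsilon > 0$, which preserves row-stochasticity and self-confidence of the same construction, restores strict positivity, and perturbs the induced consensus by at most $\epsilon$ in total-variation. This establishes the claim for every probability vector $\bm{\pi}$.
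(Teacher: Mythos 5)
Your construction is correct, but it takes a genuinely different route from the paper. You build the explicit one-parameter family $\Wv = \alpha\,\1\bm{\pi}^\top + (1-\alpha)\mI$ and verify row-stochasticity, the left-eigenvector identity $\bm{\pi}^\top\Wv=\bm{\pi}^\top$, and self-confidence (via $\alpha\le 1/2$) by direct computation; the paper instead invokes the Metropolis-chain construction of \citet{levin2008markov}: starting from an \emph{arbitrary} self-confident doubly stochastic proposal $\bm{\Phi}$, it forms $p(x,y)=\phi(x,y)\min\bigl(1,\pi(y)/\pi(x)\bigr)$ for $y\ne x$ and shows $p(x,x)\ge\phi(x,x)\ge\phi(x,y)\ge p(x,y)$, so self-confidence is inherited. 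Your argument is more elementary and self-contained, and it is more careful on one point the paper glosses over: when some $\pi_j=0$ no strictly positive matrix can realize $\bm{\pi}$ exactly, and your $\epsilon$-mixture correctly turns this into an approximate realization. What the Metropolis route buys, and what your rank-one construction does not, is a link to the rest of the paper: in the Metropolis matrix the off-diagonal trust retains the structure of the proposal $\bm{\Phi}$ (in the paper's usage, the pairwise cosine similarities), and the correction factor $\min\bigl(1,\tfrac{\pi(b)}{\pi(j)}\tfrac{\phi(bj)}{\phi(jb)}\bigr)$ is exactly the ``ideal trust'' that Section~\ref{sec:trust-desirable-properties} uses to motivate the confidence-weighting block in~\eqref{w-calculation-1}. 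In your matrix every agent distributes its off-diagonal trust identically (proportionally to $\bm{\pi}$), which suffices for the existence claim as stated but severs that connection. Both proofs establish the proposition.
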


A second nice property our trust calculation has is that for an appropriate choice of $\beta_i^{(t)}$ the proposed calculation of trust scores in \eqref{w-calculation-1} leads to scores that become more evenly distributed over time as agents gradually reach consensus. 

\begin{claim} 
\label{behavior-of-W}
Given Assumption~\ref{assump-covariate-shift} holds and that all agents are over-parameterized. Assume $\beta_i^{(t)}$ is chosen such that the trust matrix $\Wv^{(t)}$ is row-stochastic and positive for all $t\geq 0$. Then, for the trust calculation in \eqref{w-calculation-1}, we have $\Wv^{(t)}$ loses self-confidence over time and finally converges to a uniform matrix:
\begin{equation}
tr(\Wv^{(t)}) \leq tr(\Wv^{(t-1)})\qquad \text{and} \qquad \Wv^{(t)}\overset{t\rightarrow \infty}{\longrightarrow} \1\1^\top \frac 1 N
\end{equation}
\end{claim}
The proof is provided in Appendix~\ref{appendix-behavior-of-W}. This claim characterizes the behavior of our dynamic trust scheme: while initially all agents distribute trust towards helpful collaborators and try to achieve a consensus on $\XSv$. Once consensus is reached, we will have $\boldsymbol{\hat y }_i^{(t)} = \boldsymbol{\hat y }_j^{(t)}$ for any $i,j$, and  $\Wv^{(t)}$ will become a matrix with uniform weights. Thus, no individual agent has increasingly high weight after consensus is reached. This in turn implies that no agent will have the ability to excessively manipulate the collective labeling on their own. 
In the next section we discuss other robustness properties of our algorithm.

\subsection{Robustness to low-quality nodes}
\label{sec:trust-desirable-properties}

If agents possess low-quality local data, we aim to minimize their influence on the labeling of the auxiliary data not only at consensus, but also throughout the algorithm.
Proposition~\ref{claim-arbitrary low importance on b} gives sufficient conditions for such a desirable property to be preserved at any step before consensus is reached: assume there exists only one node with low-quality data, then, as long as it receives the lowest trust from any other node, it will remain to have lowest importance in the consensus.

\label{The design of trust matrix}
\begin{proposition}%
\label{claim-arbitrary low importance on b}
Given Assumption~\ref{assump-covariate-shift}, the trust matrix is row stochastic and positive, and all agents hold over-parameterized models. Let $b$ be the only node with low-quality data and $\tau$ be the timestep that consensus is reached.  If the following desirable properties hold for $t <\tau$: 
\begin{enumerate}[label=\roman*)]
\setlength\itemsep{-1em}
\item b receives the lowest trust from others than itself, i.e., $w_{jb}^{(t)} = \min_{i} w_{ji}^{(t)}$ for $j\neq b$. \\
\item $b$-th column has the lowest column sum: $\sum_j w_{jb}^{(t)} < \min_{i\neq b} \sum_j w_{ji}^{(t)}$.
\end{enumerate}
Then node $b$ will have the lowest importance in the consensus. 
\end{proposition}
The proof is given in Appendix~\ref{appendix-proof-low-importance-on-b}, where we also provide desired properties in the presence of multiple nodes with low-quality data, under some extra assumptions. Note that when nodes with weak model architectures (such as under-parameterized models) are involved, achieving consensus is not assured. If such a consensus solution does exist, it will be constrained by the underfitting of weak nodes. Consequently, this solution would not serve as a stationary solution concerning the local training loss of a strong node. Nevertheless, we conjecture and we find empirically that these desired properties can still enhance training by mitigating the impact of the weak nodes.

\paragraph{Confidence weighting.}
\label{section-confidence-weighting}
In the following paragraph, we discuss the choice of the weights $\beta_i^{(t)}$ in \eqref{w-calculation-1}. Specifically, we incorporate confidence weighting into the pairwise cosine similarity calculation to emulate the construction of a transition matrix based on a known consensus distribution. 

Let us start by outlining an idealized trust calculation that effectively down-weighs agents with low quality data. We first construct an intermediate transition matrix $\bm{\Phi}$ from pairwise cosine similarities of the agents' predictions on $\XSv$ (with row normalization). For the low-quality node $b$, we will have $\phi_{jb}$ being the lowest value in the $j$-th row, for any $j\neq b$. According to Proposition~\ref{claim-arbitrary low importance on b}, in order to have low importance of low-quality workers in the consensus, we need to set the overall trust that $b$ receives to be the lowest among all the nodes. To achieve this, we need to assign the trust of regular workers towards the low-quality workers to a very small value, as it is difficult to alter self-confidence. If the consensus importance weight is known, one can easily calculate the corresponding trust matrix 

\begin{equation}
w_{jb}=\phi_{jb}\min\left(1,\frac{\pi(b)}{\pi(j)}\frac{\phi_{bj}}{\phi_{jb}}\right),
\label{eq:w-ideal}
\end{equation}

following a classical result from Metropolis chains~\citep{levin2008markov} (also see Appendix~\ref{appendix-proof-spiky-diagonals}). We will have $w_{jb}<\phi_{jb}$ for $j\neq b$, as $\frac{\pi(b)}{\pi(j)}$ should be sufficiently small.

However, the consensus importance weight in~\eqref{eq:w-ideal} can not usually be calculated and we cannot attain the ideal trust matrix. Therefore, we propose an alternative weighting scheme that achieves similar effects: we up-weight the similarity in the region where agent $j$ has more confidence, i.e., where agent $j$'s class probability assignments have lower entropy. By doing this, we encourage that the trust weights become more concentrated on themselves and helpful workers, and less concentrated on low-quality workers. We incorporate this into the trust weight calculation~\eqref{w-calculation-1} by choosing

\[
\beta_i^{(t)}(\xx)=\frac{1}{\mathcal{H}(\ff_{\thetav_i^{(t-1)}}(\xx))}
\]

where $\mathcal H$ denotes the entropy. We offer further intuition as well as justification of this weighting scheme in Appendix~\ref{appendix-proof-entropy-weighting}. Moreover, we empirically demonstrate how our choice of trust matrix leads to a low column sum for bad nodes in Section~\ref{sec:expDL}.

\section{Experiments}
\label{experiments}

We start with a synthetic example to visualize the decision boundary achieved by our algorithm and then demonstrate its performance on real data in a heterogeneous collaborative learning setting. %

\subsection{Decision boundary visualization}
Four classes are generated via multivariate Gaussian following $P^c \sim \mathcal{N}(\muv_c,\Sigmav)$, where $\muv_0 = (-2,2)^\top$, $\muv_1 = (2,2)^\top$, $\muv_2 = (-2,-2)^\top$, $\muv_3 = (2,-2)^\top$ and $\Sigmav = \Iv_{2\times 2}$. We have four agents and each agent holds local data sampled from an even mixture of $P^c$'s. 
To simulate heterogeneity in data quality we choose to flip a fraction of labels for each agent. Namely, for agents 0-3, we randomly flip 10\% of the labels, and for the last client, we flip all labels. The unlabeled data $\XSv$ are sampled equally from $P^c$'s. The data distribution is shown in Fig.~\ref{classification-toy-example}\subref{subfigure: Local data distribution}. The base model used in each node is a multi-layer perceptron of 3 layers with 5, 10, and 4 neurons respectively. We compare the decision boundary found by Algorithm~\ref{alg:cap} with dynamic trust weight to with naive trust weight. Results are illustrated in Fig.~\ref{classification-toy-example}\subref{subfigure:2b}-\subref{subfigure:2c}. When a client with low-quality data is involved, i.e. client 3 in the toy example, our trust update scheme gives a better decision boundary to good agents after collaboration, as blind trust towards low-quality clients will impair the effectiveness of pseudo labeling. 

\begin{figure}[t!]
\centering
\begin{subfigure}[b]{0.3\textwidth}
\centering
\includegraphics[width=1\textwidth]{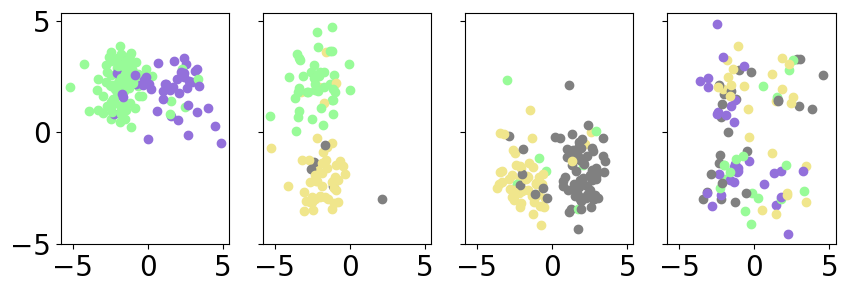}
\caption{Local data distributions}
\label{subfigure: Local data distribution}
\end{subfigure}
\begin{subfigure}[b]{0.3\textwidth}
  \centering
  \begin{adjustbox}{valign=m}
  \includegraphics[width=1\textwidth]{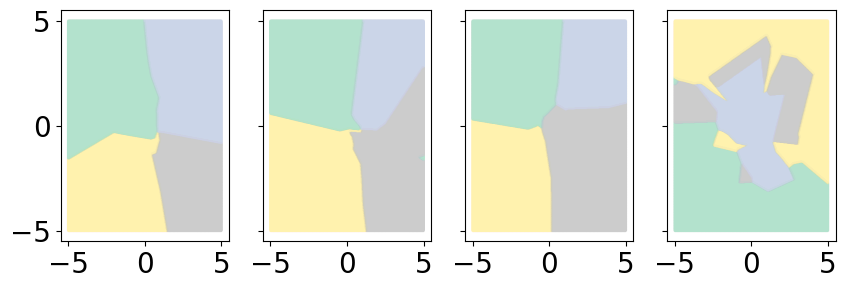}
  \end{adjustbox}
  \caption{Dynamic Trust}
\label{subfigure:2b}
  \end{subfigure}
\begin{subfigure}[b]{0.3\textwidth}
  \centering
  \includegraphics[width=1\textwidth]{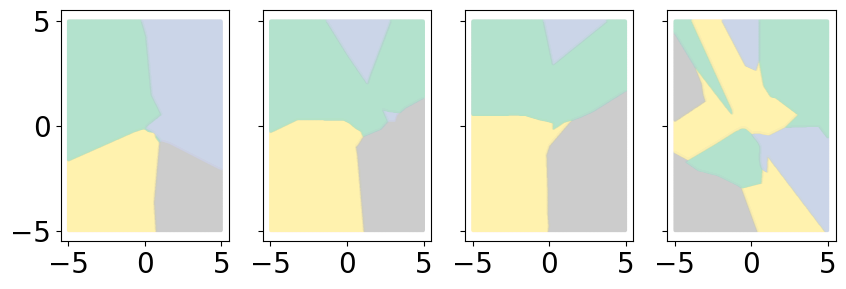}
  \caption{Naive Trust}
\label{subfigure:2c}
\end{subfigure}
\caption{Decision boundary comparison between our dynamic trust update and naive trust update}
\label{classification-toy-example}
\end{figure}

\subsection{Deep learning experiments}
\label{sec:expDL}

Next, we consider a more challenging setting, where local data distributions are non-i.i.d. Two different statistical heterogeneities are considered: (1) \emph{Synthetic heterogeneity}. We utilize the classic Cifar10 and Cifar100 datasets~\citep{cifar} and create 10 clients from each dataset. To distribute classes among clients, we use a Dirichlet distribution\footnote{The Dirichlet distributed samples are constructed using the codes from \url{https://github.com/TsingZ0/PFL-Non-IID}} with $\alpha=1$. Unless specified otherwise, we employ ResNet20~\cite{he2015deep} without pretraining. (2) \emph{Real-world data heterogeneity}. A real-world dermoscopic lesion image dataset from the ISIC 2019 challenge \citep{isic-1,isic-2,isic-3} is included here. The same client splits are used as in \citep{terrail2022flamby}, based on the imaging acquisition system employed in six different hospitals. The dataset includes eight classes of lesions to classify, with the class distribution among the clients displayed in Fig.~\ref{class-distr}\subref{bubble-isic}. Following \cite{terrail2022flamby}, we choose pretrained EfficientNet~\cite{pmlr-v97-tan19a} as the base model, and use \emph{balanced accuracy} as the evaluation metric. For every dataset, we construct $\XSv$ from equally contributed samples by every agent.

\begin{figure}[t!]
\centering
\begin{minipage}{0.5\textwidth}

\begin{subfigure}[b]{0.45\textwidth}
\centering
\includegraphics[width=\textwidth]{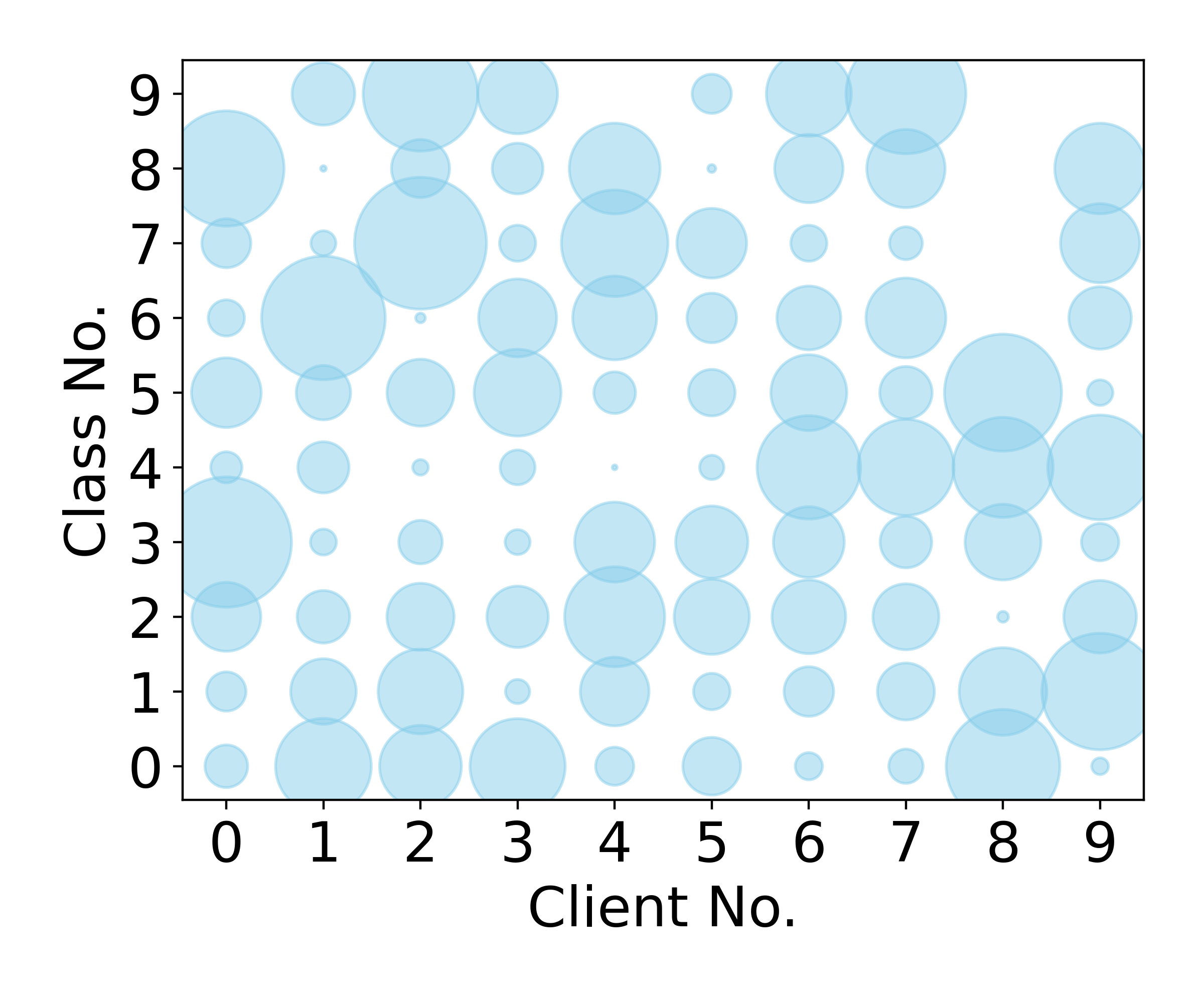}
\caption{Cifar10}
\label{bubble-cifar10}
\end{subfigure}
\begin{subfigure}[b]{0.45\textwidth}
\centering
\includegraphics[width=\textwidth]{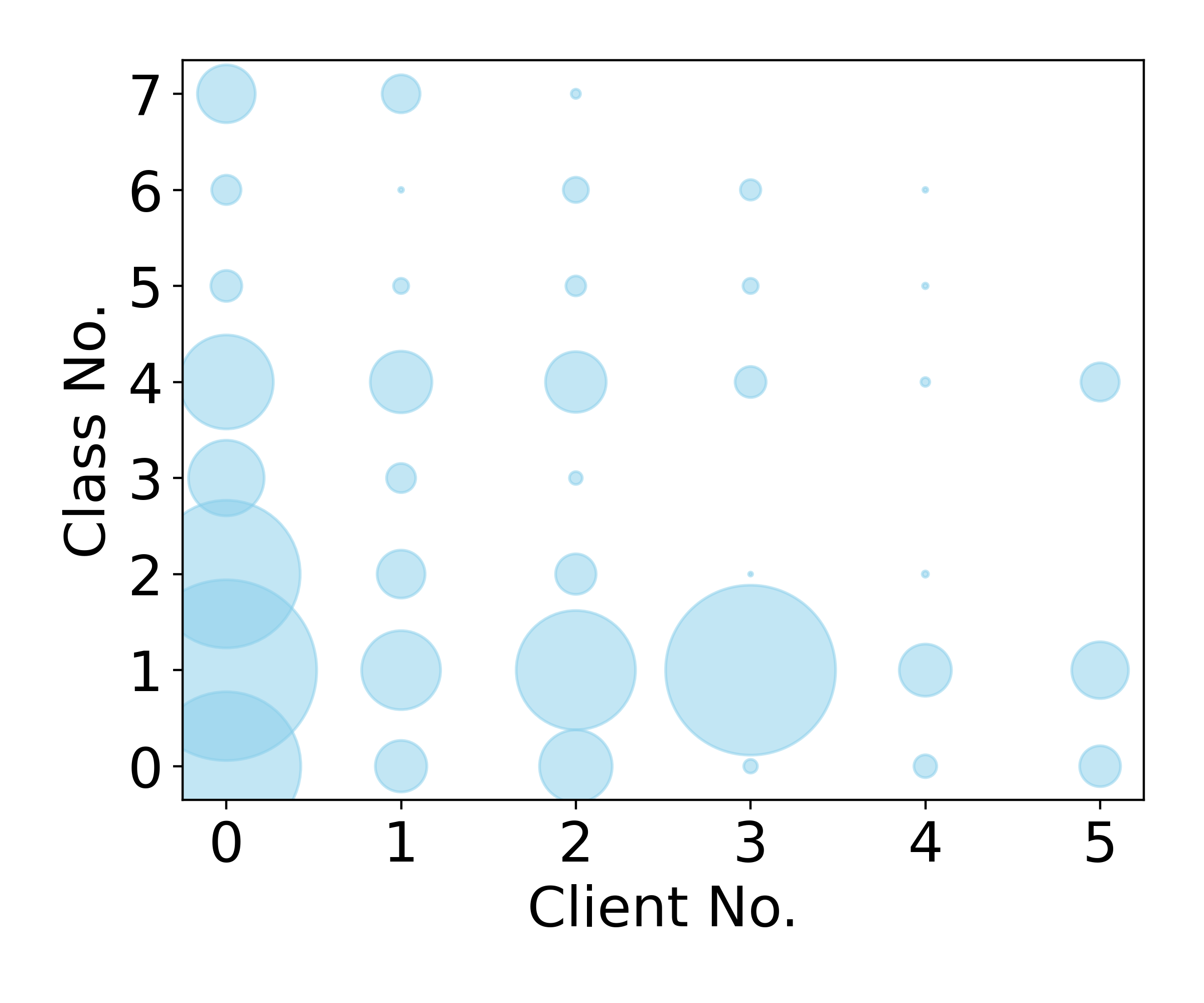}
\caption{FedISIC-2019}
\label{bubble-isic}
\end{subfigure}
\caption{Class distributions among clients}
\label{class-distr}
\end{minipage}
\begin{minipage}{0.48\textwidth}

\begin{subfigure}[b]{0.45\textwidth}
\centering
\includegraphics[width=.9\textwidth]{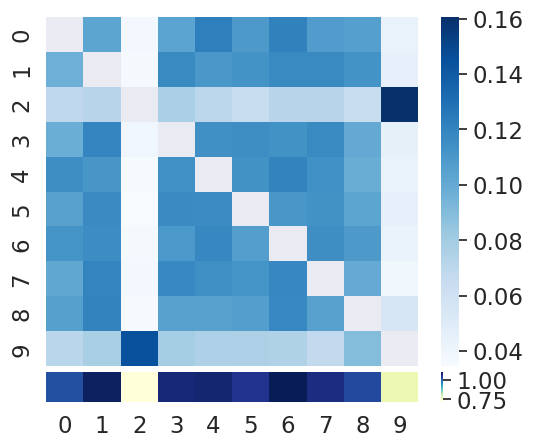}
\label{trust-bad-data}
\end{subfigure}
\begin{subfigure}[b]{0.45\textwidth}
\centering
\includegraphics[width=0.9\textwidth]{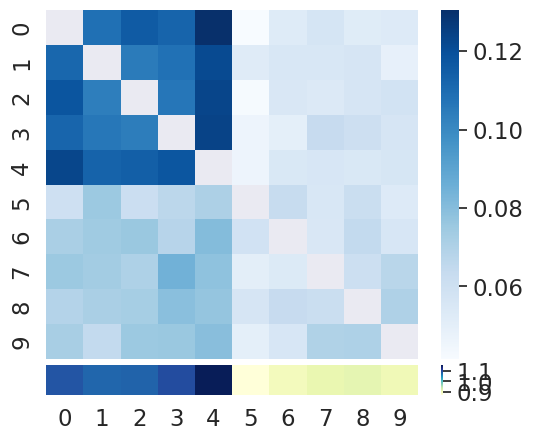}
\label{trust-weak-nodes}
\end{subfigure}
\caption{Learned trust matrix during training with diagonal entries masked and column sum reported in the lowest bar. (left) Agents 2\&9 have bad data. (right) Agents 5-9 have weak models.}
\label{trust-mat}
\end{minipage}
\end{figure}

\paragraph{Baseline methods.}
We compare our method to several baseline methods, including FedAvg~\citep{McMahanMRA16}, FedProx~\citep{li2020federated}, 
SCAFFOLD~\citep{karimireddy2021scaffold} (SCA), FedDyn~\citep{acar2021federated}, local training without collaboration (LT), 
and training with naive trust (Naive). Note with naive trust we are realizing soft majority voting, which represents the baseline method proposed from \citep{abourayya2022aimhi}. 
We adhere to the same architecture setting, where the standard federated learning algorithms can be applied. To initiate the process, we allow each client to perform local training for 5 global rounds, with the objective of obtaining a sufficiently refined model that can be used for trust evaluation. 
From the 6th training round, the clients start collaboration.  Over a total of 50 global rounds, each consisting of 5 local epochs, we report the averaged accuracy results from three repeated experiments in Table~\ref{table:1}. 
The evaluation metric is calculated on the dataset $\XSv$. $\lambda$ is fixed as 0.5 in all experiments.  
  
\paragraph{Heterogeneity in data quality.} When all nodes share the same data quality and degree of statistical heterogeneity (denoted by ``regular'' in the table), our methods align closely with consensus through naive averaging, which is optimal in this case. When all nodes share the same degree of statistical heterogeneity but differ in data quality, exemplified by randomly selecting two nodes (indexed as 2 and 9) for a complete flip of local training labels, our dynamic trust update shows better overall performances\footnote{Here we report average accuracy of regular workers, excluding workers with low-quality data}, proving the effectiveness of our approach in limiting the detrimental influences from nodes with low-quality data. We further plotted out the learned trust matrix in the dynamic update mode during one of the middle training rounds in the left plot of Fig.~\ref{trust-mat}. Clearly, our algorithm is able to give low trust weights to the nodes with low-quality data, and the 2nd and 9th columns have the lowest column sum.

\begin{table}[t!]
\centering
\caption{Our methods compare to baseline methods. \bluetext{\textbf{Blue}} denotes the algorithm with top 1 accuracy and \greentext{green} denotes the method with 2nd best accuracy. ``Ours - S" denotes the static version where the trust score is kept constant after first-time calculation (after 5 rounds of local training) and ``Ours - D" denotes the dynamic version where the trust score is updated per global round.\vspace{0.1cm}}
\resizebox{\columnwidth}{!}{
\begin{tabular}{l l c c c c c c c c } 
\toprule
& & FedAvg & FedProx & SCA & FedDyn & LT & Naive & Ours-S& Ours-D\\
\midrule
& Cifar10 & 0.542 & 0.517 & 0.578& 0.578& 0.475 & \bluetext{\textbf{0.618}}& 0.604 & \greentext{0.612}\\ 
Regular & Cifar100 & 0.261 & 0.240 & \greentext{0.317}& 0.310& 0.178 & 0.311 & \bluetext{\textbf{0.319}} & 0.308\\
& Fed-ISIC & 0.279 & 0.261 & 0.213& 0.243& 0.248 & 0.290 & \bluetext{\textbf{0.302}} & \greentext{0.291}\\
\midrule
Low- & Cifar10 & 0.541 & 0.530 & 0.570& 0.575& 0.470 & 0.596 & \greentext{0.605} & \bluetext{\textbf{0.608}} \\ 
Quality& Cifar100 & 0.254 & 0.240 & 0.289& \bluetext{\textbf{0.308}}& 0.171 & 0.285 & 0.300 & \greentext{0.306}\\
Data& Fed-ISIC & 0.229 & 0.242 & 0.221& 0.243& 0.217 & 0.247 & \greentext{0.249} & \bluetext{\textbf{0.269}}\\
\bottomrule
\end{tabular}}
\label{table:1}
\end{table}

\paragraph{Heterogeneity in model architecture.}
\label{Adaptability to varying model architectures}

\begin{figure}[t]
\centering
\begin{subfigure}{0.325\textwidth}
  \centering
  \includegraphics[width=\textwidth]{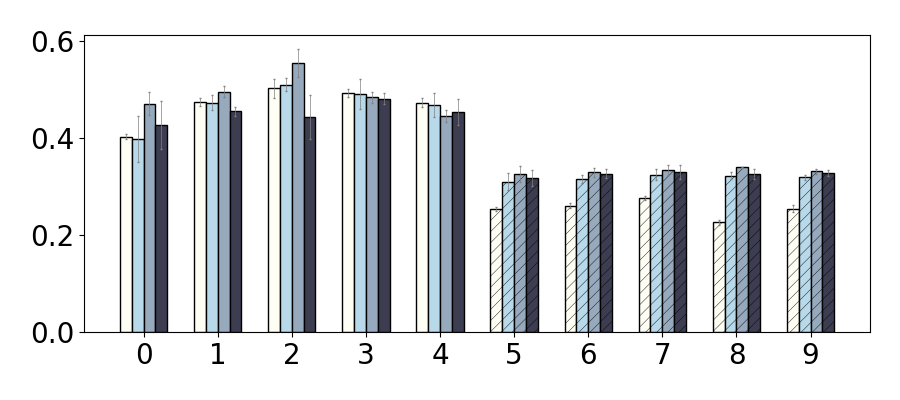}
  \end{subfigure}
\begin{subfigure}{0.325\textwidth}
  \centering
  \includegraphics[width=\textwidth]{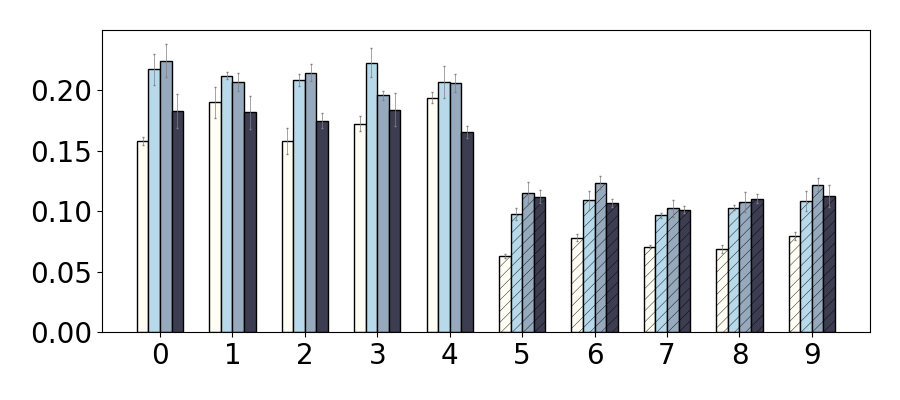}
\end{subfigure}
\begin{subfigure}{0.33\textwidth}
\centering
\includegraphics[width=\textwidth]{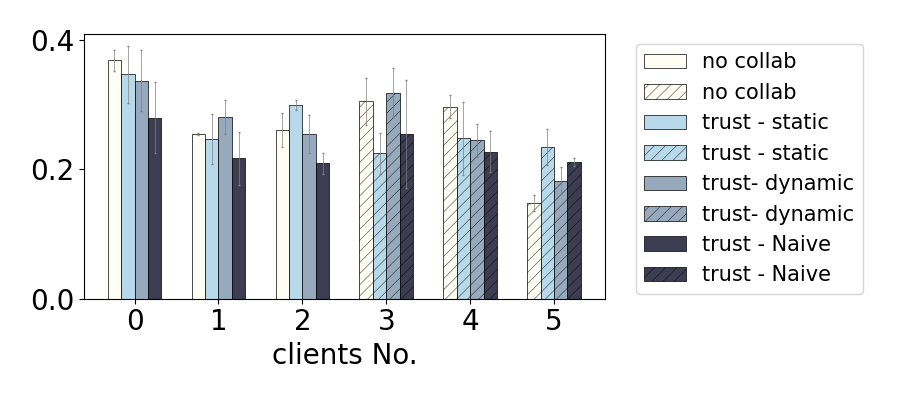}
\end{subfigure}

\caption{Target accuracy comparison with 2 different model architectures with error bars (hatch pattern denotes fully connected NN is used). From left to right: Cifar10, Cifar100, FedISIC}
\label{fig: barplot-strong-and-weak-models}
\vspace{-1em}
\end{figure}

We allocate a more expressive model architecture to the first half of the nodes and a less expressive one to the other half. The former comprises ResNet20 and EfficientNet, which were the models of choice in the previous experiments. For the latter, we employ a linear model (i.e., one-layer fully connected neural network) with a flattened image tensor as input and the output is of size equivalent to the number of classes. It is worth noting that if agents with strong and weak model architectures (as in cases of under-parameterization) co-exist, consensus might not occur, as suggested by our empirical findings illustrated in Fig.~\ref{fig: barplot-strong-and-weak-models}. Nevertheless, our trust-based collaborator selection mechanism consistently outperforms local training and simple averaging. The trust weight matrix learned during Cifar100 training is depicted in the right plot of Fig.~\ref{trust-mat}\subref{trust-weak-nodes}, revealing the presence of asymmetric trust. Specifically, the last 5 nodes exhibit a higher level of trust towards the first 5 nodes, while the opposite is not true. The trust allocation is desired in identifying the helpers. We further refer to Appendix~\ref{polynomial-example-two-model-architectures} for more empirical evidence from a toy polynomial regression example on the presence of strong and weak architectures.

\paragraph{Reduced communication costs.}

Gradient aggregation-based methods incur a significant communication burden proportional to the number of model parameters ($\mathcal{O}(N\times |\text{params}|)$), which is particularly heavy given the over-parameterized nature of modern deep learning. 
In contrast to existing approaches, our proposed method significantly reduces the communication burden by enabling each node to transmit only their predictions on the shared dataset. 
This results in communication overhead $\mathcal{O}(N^2\times n^\star \times C)$. It is clear that this value does not scale up with more complex models, and is much smaller than the model size.
Moreover, our methods maintain their high performance even when the number of local epochs increases. On the other hand, FedAvg loses its effectiveness with less frequent synchronization, i.e. more local epochs between global aggregation rounds, as shown in the left panel of Fig.~\ref{fig:no_le_vs_acc}.
   
\begin{figure}
      \centering
    \begin{subfigure}[b]{0.44\textwidth}
    \includegraphics[height=3cm]{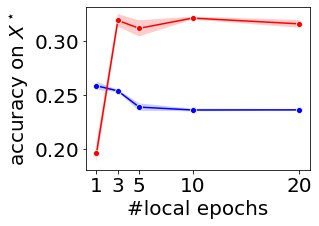}
      \end{subfigure}
      \hfill
      \begin{subfigure}[b]{0.55\textwidth}
          \includegraphics[height=3cm]{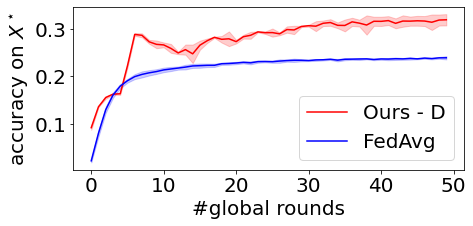}
      \end{subfigure}
      \caption{Algorithm performances on Cifar100 for different algorithm configurations. (left) effect of varying number of local epochs on final performance; (right) algorithm performance as a function of the number of training rounds for 5 local epochs each} 
      \label{fig:no_le_vs_acc}
    \label{fig:num_outer_rounds_vs_acc}
\end{figure}

\section{Discussion and extensions}

In the context of decentralized learning, we leverage the collective knowledge of individual nodes to improve the accuracy of predictions with respect to a target distribution. Our proposed trust update scheme, based on self-confidence, ensures robustness against nodes with low-quality data. By achieving consensus in the prediction space, our method effectively handles diverse model architectures within local clients, while maintaining a low communication overhead, thereby exhibiting important practical potential. Our trust-based collaborative pseudo-labeling method demonstrates a fruitful interplay between tools from semi-supervised learning and collective learning. Notably, our algorithm is intrinsically compatible with personalization, in terms of allowing some concept shift across clients. We leave this for future work.

\paragraph*{Robustness.} We have designed our algorithm with the assumption that all agents communicate \emph{honestly}, meaning that no Byzantine workers \emph{intentionally} provide incorrect information. Nevertheless, our method exhibits some resilience against a common Byzantine attack, known as the label flip (referred to as "low-quality workers" in our paper). For instance, even with 2 out of 10 workers having 100\% flipped labels, our algorithm maintains good performance.
If there are malicious workers deliberately providing incorrect information, the nodes may refuse to reach a consensus, instead of reaching a detrimental bad consensus, assuming a reasonable $\lambda$
is chosen. Consider the scenario in which a detrimental consensus is reached with malicious nodes involved; 
in this case, the consensus loss and local loss for regular nodes will not decrease in the same direction, making the consensus solution non-stationary. Notably, the "personal" component of our loss function adds an element of robustness against malicious nodes.

\paragraph*{Privacy Concerns.} While previous works show that training data can be reconstructed from model parameters~\citep{haim2022reconstructing} or gradients~\citep{wang2023reconstructing}, our algorithm requires less privacy-sensitive information sharing, which is predictions on a shared dataset. While we are aware that model predictions can still leak private information on training data due to memorization~\cite{carlini2022privacy}, there is a trade-off between the gain from collaboration and the amount of information that users are willing to share. As the number of outer rounds increases, we observe a notable improvement in accuracy within the context of $\XSv$. However, this enhanced accuracy comes at the cost of disclosing more information, a relationship that is depicted in the right panel of Fig.~\ref{fig:num_outer_rounds_vs_acc}. An interesting extension would be to apply differential privacy~\cite{dwork06dp} to further guarantee privacy, as well as to design methods of privacy accounting, e.g.,~\cite{abadi16account} so each agent can maintain control over the local privacy leakage at any time.

\paragraph*{Acknowledgements.}

DF would like to thank Anastasia Koloskova, Felix Kuchelmeister, Matteo Pagliardini, and Nikita Doikov for helpful discussions during the project and El Mahdi Chayti for proofreading. DF acknowledges funding from EDIC fellowship from the Department of Computer Science at EPFL. CM acknowledges support from the Tübingen AI Center. This project was supported by SNSF grant 200020\_200342.

\medskip
\bibliography{bib}

\newpage
\appendix

\section*{Appendix}
\section{Polynomial regression examples}

The true underlying function is chosen as $f(x)=0.5x^3+0.3x^2-5x+4$. There are three agents in total, each of whom has 50 data points. The local data points are generated using normal distributions: $x_1 \sim \mathcal{N}(-2,1)$, $x_2 \sim \mathcal{N}(0,1)$ and $x_3 \sim \mathcal{N}(2,1)$. To introduce noise in the labels, each agent adds a normally distributed error term with zero mean and unit variance, i.e. $y_i = f(x_i) + \epsilon$ with $\epsilon \sim \mathcal{N}(0,1)$.
A set of 50 equally spaced data points in the range of $-4$ to $4$, denoted as $\XSv$, is used in the analysis. 

For the Example in Section~\ref{section-polynomial-regression}
\label{details-on-polynomial-example}
the algorithm is applied using fixed trust weights with 1/3 in each entry and $\lambda$ is chosen as 1.

\subsection{Example with strong and weak architectures}
\label{polynomial-example-two-model-architectures}
For this example we use the same setup as before, but there are four agents in total, each of whom has 50 data points. The local data points are generated using normal distributions: $x_1 \sim \mathcal{N}(-2,1)$, $x_2 \sim \mathcal{N}(0,1)$, $x_3 \sim \mathcal{N}(2,1)$ and $x_4 \sim \mathcal{N}(3,1)$. 

The algorithm is applied using dynamic trust weights and $\lambda$ is chosen as 1. For the first three agents, a polynomial model with a maximum degree of four is fit, while for the fourth agent, a polynomial model with a maximum degree of one is fit, signifying a weak node.

We see that after 50 rounds of model training using our proposed algorithm with dynamic trust, agent 4's model is still underfitting due to its limited expressiveness. Agents 1-3 end up agreeing with each other and giving good predictions in the union of their local regions. While with naive trust weights, we see that the strong agents also get influenced in the region where they could perform well, as the underfitted model has a stronger impact through collective pseudo-labeling. 

\begin{figure}[hbt!]
\centering
  \begin{subfigure}[b]{0.24\textwidth}
      \centering
      \includegraphics[width=\textwidth]{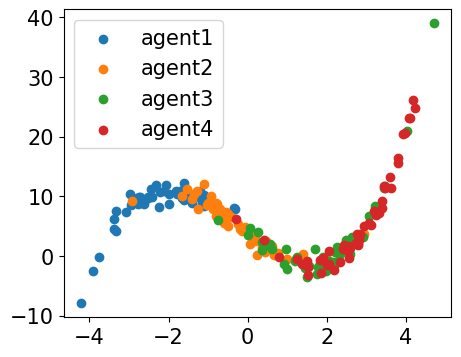}
      \caption{}
      \end{subfigure}
  \begin{subfigure}[b]{0.25\textwidth}
      \centering
      \includegraphics[width=\textwidth]{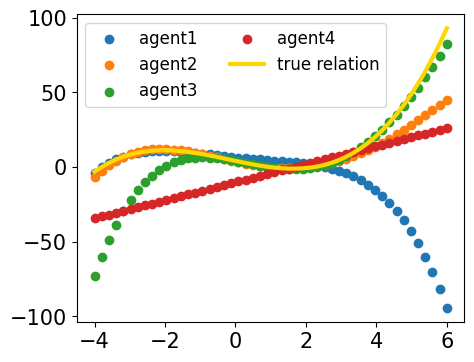}
      \caption{}
  \end{subfigure}
  \begin{subfigure}[b]{0.25\textwidth}
    \centering
    \includegraphics[width=\textwidth]{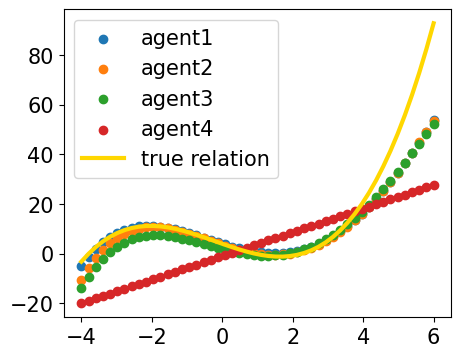}
  \caption{}
\end{subfigure}
    \begin{subfigure}[b]{0.24\textwidth}
      \centering
      \includegraphics[width=\textwidth]{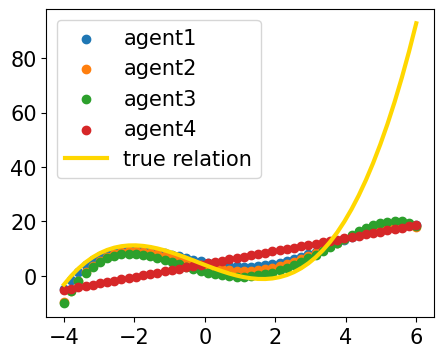}
      \caption{}
\end{subfigure}
  
  \caption{(a) local data distributions in each agent; (b) local model fit without collaboration; (c) model fits after 50 rounds of our algorithm with dynamic trust update; (d) model fits after 50 rounds with naive trust update}
\end{figure}

\section{Proof of Theorem~\ref{Theorem-consensus-on-preds}}
\label{Appendix-proof-of-the-main-thm}
The proof is rooted in the results from the work of \citet{prod_stochastic}, we recommend readers to check the original paper for more detailed references. Note, for the following texts, when we say a matrix $\Wv$ has certain properties, it is equivalent to saying a Markov chain induced by transition matrix $\Wv$ has certain properties. 
\begin{definition}[Irreducible Markov chains]\label{def-irreducibility-graph} A Markov chain induced by transition matrix $\Wv$ is irreducible if for all i, j, there exists some $n$ such that $\Wv^n_{ij} > 0$. Equivalently, the graph corresponding to $\Wv$ is strongly connected.
\end{definition}

\begin{definition}[Strongly connected graph]
\label{def-strongly-connected-graph}
A graph is said to be strongly connected if every vertex is reachable from every other vertex.
\end{definition}

\begin{definition}[Aperiodic Markov chains] \label{def-aperiodicity} 
A Markov chain induced by transition matrix $\Wv$ is aperiodic if every state has a self-loop. By self-loop, we mean that there is a nonzero probability of remaining in that state, i.e. $w_{ii}>0$ for every $i$. 
\end{definition}

\begin{assumption}
  \label{assump-positivity of W}
  $\Wv^{(t)}$'s are row-stochastic and positive
  , i.e. $\sum_j w_{ij}=1$ for any row $i$, and $w_{ij} > 0$.
\end{assumption}

\begin{claim}
\label{claim-irr-aper}
Given Assumption~\ref{assump-positivity of W}, the matrix product of any $n$ elements of $\{\Wv^{(t)}\}$ are SIA (SIA stands for stochastic, irreducible and aperiodic) for $n\geq 1$.
\end{claim}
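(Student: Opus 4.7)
The plan is to verify the three properties (stochasticity, irreducibility, aperiodicity) in turn for an arbitrary product $\mW = \mW^{(t_n)} \mW^{(t_{n-1})} \cdots \mW^{(t_1)}$, using only the positivity and row-stochasticity granted by Assumption~\ref{assump-positivity of W}. The key observation I would exploit is that all three properties are closed under matrix multiplication in our setting, so a short induction on $n$ suffices.

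For stochasticity, I would first note that if $\mA$ and $\mB$ are row-stochastic then $(\mA \mB)\mathbf{1} = \mA(\mB \mathbf{1}) = \mA \mathbf{1} = \mathbf{1}$, and all entries of $\mA \mB$ are nonnegative since they are sums of products of nonnegative numbers. Induction over $n$ then gives stochasticity of $\mW$. For strict positivity, the same induction works: if $\mA, \mB$ have strictly positive entries, then $(\mA \mB)_{ij} = \sum_k A_{ik} B_{kj} > 0$ because every summand is strictly positive. Thus $\mW$ has strictly positive entries.

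Irreducibility and aperiodicity then drop out almost for free from strict positivity. Indeed, $\mW_{ij} > 0$ for every $(i,j)$ means that in the graph induced by $\mW$, every vertex is reachable from every other vertex in a single step, so the graph is strongly connected and hence $\mW$ is irreducible in the sense of Definition~\ref{def-irreducibility-graph}. Similarly, $\mW_{ii} > 0$ for all $i$ yields a self-loop at every state, which is exactly the condition for aperiodicity in Definition~\ref{def-aperiodicity}.

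I do not anticipate a real obstacle here: the claim is essentially a bookkeeping statement that the class of stochastic matrices with strictly positive entries is closed under multiplication, combined with the fact that strict positivity automatically implies both irreducibility and aperiodicity of the induced chain. The only thing to be slightly careful about is to state the induction cleanly (the base case $n=1$ is just Assumption~\ref{assump-positivity of W}, and the inductive step uses that $\mW^{(t_n)} \cdot \bigl( \mW^{(t_{n-1})} \cdots \mW^{(t_1)} \bigr)$ is again a product of two row-stochastic, strictly positive matrices), after which the SIA conclusion follows immediately.
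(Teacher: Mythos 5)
Your proposal is correct and follows essentially the same route as the paper's own proof: both argue that strict positivity and row-stochasticity are preserved under matrix products, and that strict positivity of the product immediately yields irreducibility (the induced graph is fully, hence strongly, connected) and aperiodicity (positive diagonal gives self-loops). The only cosmetic difference is that you organize the closure argument as an explicit induction on $n$, which the paper leaves implicit.
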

\begin{proof}
According to the assumption that all $\Wv^{(t)}$'s are positive, and thus we have any product of $\Wv^{(t)}$'s being positive in each entry, which is equivalent to the graph introduced by the product being fully connected. Being fully connected implies being strongly connected. According to Definitions~\ref{def-irreducibility-graph}~\ref{def-strongly-connected-graph}, irreducibility follows. 

By the product being positive, we also have its diagonal entries being all positive. According to Definition~\ref{def-aperiodicity}, aperiodicity follows.

The product of row-stochastic matrices remains row-stochastic: for $\bm{A}$ and $\BB$ row stochastic, we have the product $\bm{A}\BB$ remains row-stochastic. 
\[\sum_j (\sum_k a_{ik}b_{kj})=\sum_k a_{ik}(\sum_jb_{kj})=1, \: \forall i\] \\
Thus, we have any product of $\Wv^{(t)}$'s being irreducible, aperiodic and stochastic (SIA). 
\end{proof}

\begin{theorem}[Rewrite of~\citet{prod_stochastic}]
\label{thm-rank 1}
Let $\bm{A}_1$, ..., $\bm{A}_k$ be square row stochastic matrices of the same order and any product of the $\bm{A}$'s (of whatever length) is SIA. When $k \rightarrow \infty$, the product of $\bm{A}_1$, ..., $\bm{A}_k$ gets reduced to a matrix with identity rows.
\end{theorem}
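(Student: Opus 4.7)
The plan is to prove this classical statement (essentially Wolfowitz's 1963 theorem on inhomogeneous products of stochastic matrices) via Hajnal's coefficient of ergodicity. For a row-stochastic matrix $P=(p_{ij})$ of order $N$, I would introduce
\[\delta(P) := 1-\min_{i,k}\sum_{j=1}^N \min(p_{ij}, p_{kj}),\]
which lies in $[0,1]$, vanishes exactly when all rows of $P$ coincide, and is strictly less than $1$ exactly when $P$ is \emph{scrambling} (every pair of rows shares a column on which both entries are positive). Two basic properties I would establish first are: (i) submultiplicativity, $\delta(PQ)\le\delta(P)\delta(Q)$ for row-stochastic $P,Q$; and (ii) preservation of row-stochasticity under multiplication. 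With (i) the problem of showing ``rows get identical in the limit'' reduces to showing that $\delta$ of the product tends to zero.

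The next step is to bridge SIA to scrambling. A classical fact is that any single SIA matrix becomes scrambling after sufficiently many self-multiplications, so in particular $\delta$ of some power is strictly less than $1$. The hypothesis here -- that \emph{every} finite product of the $\bm{A}_i$'s is SIA -- is much stronger and permits an upgraded, uniform version: there exist an integer $m$ and a constant $c\in[0,1)$ such that every length-$m$ product of $\bm{A}_i$'s has ergodic coefficient at most $c$. Following Wolfowitz, this is obtained by classifying SIA matrices of order $N$ into finitely many ``types'' (the partition of rows that merge in the limit of self-multiplication) and arguing that, for $m$ large enough, the only type that can occur as a length-$m$ product is the trivial one, which corresponds to a scrambling matrix.

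Given such $m$ and $c$, I would split the product $\bm{A}_1 \bm{A}_2 \cdots \bm{A}_k$ into $\lfloor k/m\rfloor$ consecutive blocks of length $m$ plus a remainder of length $<m$, apply (i) iteratively, and obtain $\delta(\bm{A}_1 \cdots \bm{A}_k)\le c^{\lfloor k/m\rfloor}\to 0$ as $k\to\infty$. Since the partial products are row-stochastic (hence live in a compact subset of $\mathbb{R}^{N\times N}$) and the gaps between their rows shrink to zero, the sequence must converge to a row-stochastic matrix whose rows are all equal -- i.e.\ a matrix with identity rows, in the terminology of the statement.

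The main obstacle is the uniform scrambling step in the second paragraph: upgrading the pointwise ``SIA implies eventually scrambling'' fact to a single pair $(m,c)$ that works for \emph{all} length-$m$ products built from the $\bm{A}_i$'s. When the matrices are drawn from a finite pool this is essentially a counting argument, since there are then only finitely many length-$m$ products and each is eventually scrambling; in general it relies on Wolfowitz's finite-type classification of SIA matrices, which is the genuine technical core of the proof and which I would cite from \citet{prod_stochastic} rather than reprove. Everything else -- the properties of $\delta$, the block decomposition, and the compactness-plus-contraction conclusion -- is routine once that uniform bound is in place.
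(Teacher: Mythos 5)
The paper never actually proves Theorem~\ref{thm-rank 1}: it is imported wholesale as a ``rewrite'' of \citet{prod_stochastic} and invoked as a black box in the proof of Theorem~\ref{Theorem-consensus-on-preds}, so there is no in-paper argument to compare yours against. Your sketch is the standard Wolfowitz--Hajnal proof of the cited result, and it matches the fragments of that toolkit the paper does import elsewhere: your coefficient is exactly the quantity the paper calls $\lambda(\Wv)$ in Definition~\ref{def-scrambling-matrix} (beware that the paper reserves $\delta$ for the row-difference measure, a different object), and your submultiplicativity step is the content of Lemma~\ref{lemma-product-pf-scramling-matrices}, which the paper uses only for Claim~\ref{behavior-of-W}. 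Your identification of the uniform scrambling bound as the genuine technical core, with everything else routine, is accurate.

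One point deserves sharpening, because it is where the statement itself is fragile. The uniform pair $(m,c)$ with $c<1$ really does require that the matrices be drawn from a \emph{fixed finite set}: the finite-type classification yields a uniform length $m$ after which every word is scrambling, but the uniform constant $c<1$ is obtained by maximizing over the finitely many length-$m$ words, so your clause ``in general it relies on Wolfowitz's finite-type classification'' overstates what the classification alone buys. For an infinite sequence of distinct matrices the conclusion can fail even though every finite product is SIA: take $\bm{A}_t=(1-\epsilon_t)\Iv+\epsilon_t\1\1^\top/N$ with $0<\epsilon_t<1$ and $\sum_t\epsilon_t<\infty$; every finite product is entrywise positive, hence SIA, yet the infinite product equals $c\,(\Iv-\1\1^\top/N)+\1\1^\top/N$ with $c=\prod_t(1-\epsilon_t)>0$, whose rows are not identical. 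So the theorem as rewritten here (``when $k\to\infty$'') is correct only under the finite-alphabet reading, or under an extra uniform positivity assumption on the entries that would make each factor's scrambling coefficient at most $1-N\gamma<1$. This is less a defect of your proof than of the paper's rewording --- but since the paper applies the result to a time-varying sequence $\Wv^{(t)}$ constrained only by Assumption~\ref{assump-positivity of W}, the same gap is live in the paper's own use of the theorem, and your write-up should state the finite-pool (or uniform lower bound) hypothesis explicitly rather than leaving it implicit.
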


Following Assumptions~\ref{assump-covariate-shift}\ref{assump-positivity of W}, we have $\psi^{(t)}=\Wv^{(t)}\psi^{(t-1)}$ holds for all $t\geq 1$.
From Claim~\ref{claim-irr-aper}, we have any products of $\Wv^{(t)}$'s being SIA. From Theorem~\ref{thm-rank 1}, we have the product $\Wv^{(t)}\Wv^{(t-1)}\hdots\Wv^{(1)}$ gets reduced to a matrix with identical rows when $t$ goes to infinity. That implies, $\psi^{\infty}$ has identical rows. The statement is thus proved.

\section{Proof of Claim~\ref{behavior-of-W}}
\label{appendix-behavior-of-W}
\begin{definition}[Row differences] Define how different the rows of $\Wv$ are by 
\label{def-row-differences}
\begin{equation}
\delta(\Wv) = \max_j \max_{i_1,i_2} |w_{i_1j}-w_{i_2j}|
\end{equation}
For identical rows, $\delta(\Wv)=0$
\end{definition}
\begin{definition}[Scrambling matrix]
\label{def-scrambling-matrix}
$\Wv$ is a scrambling matrix if 
\begin{equation}
\lambda(\Wv) \coloneqq 1-\min_{i_1,i_2}\sum_j \min(w_{i_1j}, w_{i_2j})<1
\end{equation}
\end{definition}
In plain words, Definition~\ref{def-scrambling-matrix} says that if for every pair of rows $i_1$ and $i_2$ in a matrix $\Wv$, there exists a column $j$ (which may depend on $i_1$ and $i_2$) such that $w_{i_1j}>0$ and $w_{i_2j}>0$, then $\Wv$ is a scrambling matrix. It is easy to verify that a positive matrix is always a scrambling matrix.

\begin{lemma}[Adaptation of Lemma 2 from~\citet{prod_stochastic}]
\label{lemma-product-pf-scramling-matrices}
For any $t$, 
\begin{equation}
\delta(\Wv^{(t)}\Wv^{(t-1)}\hdots \Wv^{(1)}) \leq \prod_{i=1}^{t} \lambda(\Wv^{(i)})
\end{equation}
\end{lemma}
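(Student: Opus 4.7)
The plan is to reduce the lemma to a single-step contraction estimate and then iterate. Concretely, I would establish
\[
\delta(\BB\bm{A}) \leq \lambda(\BB)\,\delta(\bm{A})
\]
for any row-stochastic matrix $\BB$ and any matrix $\bm{A}$ of compatible dimensions, together with the complementary pointwise bound $\delta(\Wv) \leq \lambda(\Wv)$ for a single row-stochastic $\Wv$. Granting these, the lemma follows by induction on $t$: the base case $t=1$ is $\delta(\Wv^{(1)}) \leq \lambda(\Wv^{(1)})$, and the inductive step applies the contraction with $\BB = \Wv^{(t)}$ and $\bm{A} = \Wv^{(t-1)}\cdots\Wv^{(1)}$ (which is still row-stochastic as a product of row-stochastic matrices).

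For the pointwise bound $\delta(\Wv) \leq \lambda(\Wv)$, I would note that for any two rows of a row-stochastic matrix the difference $v \in \real^N$ satisfies $\sum_j v_j = 0$, and for any such zero-sum vector $\|v\|_\infty \leq \tfrac{1}{2}\|v\|_1$. A short calculation identifies the right-hand side with $1 - \sum_j \min(w_{i_1 j}, w_{i_2 j})$, and maximizing over pairs $(i_1,i_2)$ yields the claim.

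For the contraction inequality, fix indices $i_1, i_2, j$ and write
\[
(\BB\bm{A})_{i_1 j} - (\BB\bm{A})_{i_2 j} = \sum_k (b_{i_1 k} - b_{i_2 k})\,a_{kj}.
\]
Decompose $b_{i_r k} = p_k + u^{(r)}_k$ with $p_k = \min(b_{i_1 k}, b_{i_2 k})$, so that the supports of $u^{(1)}$ and $u^{(2)}$ are disjoint. Row-stochasticity of $\BB$ forces $\sum_k u^{(1)}_k = \sum_k u^{(2)}_k = 1 - \sum_k p_k =: s$. The entrywise difference then equals $s(\bar a_+ - \bar a_-)$, where $\bar a_\pm$ are convex combinations of the common value set $\{a_{kj}\}_k$ (with weights $u^{(r)}_k/s$). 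Any two convex combinations of the same values differ by at most $\max_k a_{kj} - \min_k a_{kj} \leq \delta(\bm{A})$, so $|(\BB\bm{A})_{i_1 j} - (\BB\bm{A})_{i_2 j}| \leq s\,\delta(\bm{A})$. Maximizing over $(i_1,i_2)$ turns $s$ into $\lambda(\BB)$, and a further max over $j$ preserves the bound.

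The main obstacle, and the only non-bookkeeping step, is the sign decomposition in the contraction and the observation that the two resulting weighted averages $\bar a_\pm$ are convex combinations of the \emph{same} ground set $\{a_{kj}\}_k$ (with disjoint-support weights summing to $s$ on each side). Once this is in place, the inductive chain $\delta(\Wv^{(t)} \cdots \Wv^{(1)}) \leq \lambda(\Wv^{(t)})\,\delta(\Wv^{(t-1)} \cdots \Wv^{(1)}) \leq \cdots \leq \prod_{i=1}^{t} \lambda(\Wv^{(i)})$, terminated by the base inequality applied to $\Wv^{(1)}$, is immediate.
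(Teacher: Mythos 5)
Your proof is correct. Note, however, that the paper does not actually prove this lemma at all: it is imported verbatim as an adaptation of Lemma~2 of \citet{prod_stochastic} (the classical Hajnal/Wolfowitz inequality for products of stochastic matrices), so there is no in-paper argument to compare against. What you supply is the standard self-contained derivation via the Dobrushin-type contraction coefficient: the one-step bound $\delta(\BB\bm{A})\le\lambda(\BB)\,\delta(\bm{A})$ obtained by splitting two rows of $\BB$ into their common part $p_k=\min(b_{i_1k},b_{i_2k})$ plus disjointly supported nonnegative remainders of equal mass $s=1-\sum_k p_k$, so that the row difference of $\BB\bm{A}$ in column $j$ becomes $s$ times a difference of two convex combinations of the same column values $\{a_{kj}\}_k$; together with the base inequality $\delta(\Wv)\le\lambda(\Wv)$, which follows from $\|v\|_\infty\le\tfrac12\|v\|_1=1-\sum_j\min(w_{i_1j},w_{i_2j})$ for the zero-sum row difference $v$. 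All steps check out (including the degenerate case $s=0$, where the difference vanishes and the bound is trivial, and the fact that $\lambda(\cdot)\in[0,1]$ so the induction multiplies through correctly). Your argument buys a self-contained verification of a result the paper only cites, and it makes explicit that the inequality needs no scrambling or positivity assumption on the $\Wv^{(i)}$ beyond row-stochasticity and nonnegativity, which is slightly more general than the setting in which the paper invokes it.
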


Lemma~\ref{lemma-product-pf-scramling-matrices} states that multiplying with scrambling matrices will make the row differences smaller. 
$tr(\Wv^{(t)})=\sum_i w_{ii}^{(t)}$ represents the sum of self-confidences of all nodes.
As every $\Wv^{(t)}$ is positive, we have all $\Wv^{(t)}$'s scrambling. Thus, the differences between rows of $\Wv^{(t)}\Wv^{(t-1)}..\Wv^{(1)}$ get smaller when $t$ gets bigger. 

As $\psiv_i^{(t)} = \sum_j [\Wv^{(t)}\Wv^{(t-1)}..\Wv^{(1)}]_{ij} \psiv_j^{(t-1)}$, we have the predictions on $\XSv$ given by all nodes get similar over time.
According to our calculation of $\Wv^{(t)}$ in Equation~\eqref{w-calculation-1}, which is based on cosine similarity between predictions, it follows that an agent's trust towards the others gets larger over time. That is, $\sum_j w_{ij}^{(t+1)}\geq \sum_j w_{ij}^{(t)}$. Since each row sums up to 1, we have $w_{ii}^{(t+1)}\leq w_{ii}^{(t)}$, for all $i$. 

According to Theorem~\ref{Theorem-consensus-on-preds}, we have $\psiv_i^{(t)}=\psiv_j^{(t)}$ as $t \rightarrow \infty$, for any $i$ and  $j$. According to the calculation of $\Wv$, we have $\Wv^{(t)}$ with equal entries when $t$ reaches infinity.

\section{Proof of Proposition~\ref{claim-spiky-diagonal}}
\label{appendix-proof-spiky-diagonals}

Recall stationary distribution ($\piv \in \mathbb{R}^{1\times N}$) of a Markov chain being 
\begin{equation}
\label{eq-stationary-distr}
\lim_{t\rightarrow \infty} \Wv^{(t)} \hdots \Wv^{(1)} \rightarrow [\piv^\top \hdots \piv^\top]^\top
\end{equation}

The proof follows from the construction of Metropolis chains given a stationary distribution. We will first give an example of how Metropolis chains work.

\begin{example}[Metropolis chains~\citep{levin2008markov}]
\label{example-metropolis chain}
Given stationary distribution $\bm{\pi}=[0.3,0.3,0.3,0.1]$, how could we construct a transition matrix that leads to the stationary distribution? 

Suppose $\bm{\Phi}$ is a symmetric matrix, one can construct a Metropolis chain $\bm{P}$ as follows:
\begin{equation}
\label{P-and-Phi}
p(x,y) = \begin{cases}
  \phi(x,y)\min \left(1,\frac{\pi(y)}{\pi(x)}\right) & y\neq x\\
  1-\sum_{z\neq x}\phi(x,z)\min\left(1,\frac{\pi(z)}{\pi(x)}\right) & y = x
\end{cases}
\end{equation}
Choose $\bm{\Phi} = \begin{bmatrix}
1/3 &1/4 &1/4 &1/6\\
1/4 & 1/3 & 1/4 &1/6\\
1/4 & 1/4 & 1/3 & 1/6\\
1/6 & 1/6 & 1/6 & 1/2 
\end{bmatrix}$, we could get $\bm{P} = \begin{bmatrix}
4/9 &1/4 &1/4 &1/18\\
1/4 & 4/9 & 1/4 &1/18\\
1/4 & 1/4 & 4/9 & 1/18\\
1/6 & 1/6 & 1/6 & 1/2 
\end{bmatrix}$. 

It can be verified that $\bm{\pi}$ is the stationary distribution of Markov chain with transition matrix $\bm{P}$. If $\bm{\Phi}$ is not symmetric, we modify $\frac{\pi(y)}{\pi(x)}$ to $\frac{\pi(y)}{\pi(x)}\frac{\phi(y,x)}{\phi(x,y)}$, and the results remain unchanged.
\end{example}

Following Example~\ref{example-metropolis chain}, choose $\bm{\Phi}$ to be any self-confident doubly stochastic matrix. For all $x$, choose $\bm{P}$ as calculated from \eqref{P-and-Phi}, we have
\begin{equation}
p(x,x) = 1-\sum_{z\neq x} \phi(x,z)\min\left(1,\frac{\pi(z)}{\pi(x)}\right) \geq 1-\sum_{z\neq x} \phi(x,z) =\phi(x,x)
\end{equation}
we see that probability distribution among each row gets more concentrated on the diagonal entries in $\bm{P}$ than $\bm{\Phi}$. As $\bm{\Phi}$ already has high diagonal values, the claim follows.

\section{Proof of Proposition~\ref{claim-arbitrary low importance on b}}
\label{appendix-proof-low-importance-on-b}
Proposition~\ref{claim-arbitrary low importance on b} states sufficient conditions for $\Wv^{(t)}$'s to have such that a low-quality node $b$ is assigned lowest importance in $\piv$, i.e. $\pi_b = \min_i \pi_i$. From Equation~\eqref{eq-stationary-distr}, $\piv$ comes from the product of trust matrices. We start from a product of two such matrices.

\begin{proposition}
\label{time-homogenous-MC-lowest-importance}
For row-stochastic and positive matrices $\bm{A}$ and $\BB$, and $\bm{C}=\bm{A}\BB$, if in both $\bm{A}$ and $\BB$,
\begin{enumerate}[label=\roman*)]
\setlength\itemsep{0em}
\item $j$-th column has the lowest column sum,

\item $(i,j)$-th entry being the lowest value in $i$-th row for $i\neq j$
\end{enumerate}
then we have $j$-th column remains the the lowest column sum in matrix $\bm{C}$ and $(i,j)$-th entry being the lowest value in $i$-th row of $\bm{C}$ for $i\neq j$, 
\end{proposition}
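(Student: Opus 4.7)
The plan is to introduce the column-sum shorthand $s_k^A := \sum_i a_{ik}$ and $s_k^B := \sum_i b_{ik}$, so that hypothesis~(1) reads $s_j^A = \min_k s_k^A$ and $s_j^B = \min_k s_k^B$, while hypothesis~(2) reads $a_{ij} = \min_k a_{ik}$ for $i \neq j$ and $b_{ij} = \min_k b_{ik}$ for $i \neq j$. Both conclusions about $\bm{C} = \bm{A}\BB$ reduce to showing that a certain difference of entries is non-positive. I would expand each entry of $\bm{C}$ as a sum over $k$, isolate the single index $k = j$ (where the row-minimum hypothesis is silent), dominate the remaining terms by the column-$j$ value, and then telescope using the column-sum identities for $\BB$ so that only a comparison of overall column sums is left.

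For the column-sum claim, fix $l \neq j$ and expand
\begin{equation*}
\sum_i c_{ij} - \sum_i c_{il} \;=\; \sum_k s_k^A (b_{kj} - b_{kl}) \;=\; s_j^A(b_{jj} - b_{jl}) + \sum_{k \neq j} s_k^A(b_{kj} - b_{kl}).
\end{equation*}
For $k \neq j$, hypothesis~(2) on $\BB$ gives $b_{kj} - b_{kl} \leq 0$; combined with $s_k^A \geq s_j^A \geq 0$ this yields $s_k^A(b_{kj} - b_{kl}) \leq s_j^A(b_{kj} - b_{kl})$. Substituting this bound and using $\sum_{k \neq j}(b_{kj} - b_{kl}) = (s_j^B - b_{jj}) - (s_l^B - b_{jl})$, the isolated $k = j$ piece cancels the $-b_{jj}$ and $+b_{jl}$ contributions exactly, leaving the upper bound $s_j^A(s_j^B - s_l^B) \leq 0$ by hypothesis~(1) on $\BB$.

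The row-wise claim follows the same template. Fix $i \neq j$ and $l \neq j$ and write $c_{ij} - c_{il} = a_{ij}(b_{jj} - b_{jl}) + \sum_{k \neq j} a_{ik}(b_{kj} - b_{kl})$. For $k \neq j$, hypothesis~(2) on $\bm{A}$ (applicable because $i \neq j$) gives $a_{ik} \geq a_{ij}$, and combined with $b_{kj} - b_{kl} \leq 0$ this yields $a_{ik}(b_{kj} - b_{kl}) \leq a_{ij}(b_{kj} - b_{kl})$. The identical cancellation as in the previous paragraph then reduces the bound to $a_{ij}(s_j^B - s_l^B) \leq 0$, so entry $(i,j)$ remains the smallest value in row $i$ of $\bm{C}$.

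The main obstacle is precisely the $k = j$ term in each sum: hypothesis~(2) gives no control over $b_{jj}$ (the row-minimum property is asserted only for $i \neq j$), so the domination used for the other indices cannot be extended to that one index. The resolution is the algebraic cancellation shown above: pairing the isolated $k = j$ contribution $(b_{jj} - b_{jl})$ with the telescoped sum over $k \neq j$ absorbs $b_{jj}$ and $b_{jl}$ exactly, so both claims collapse to the clean column-sum comparison from hypothesis~(1). Once this preservation-under-multiplication statement is in place, a routine induction extends it to the finite product $\Wv^{(t)} \cdots \Wv^{(1)}$, and via~\eqref{eq-stationary-distr} this yields the desired low importance of node $b$ in the stationary distribution $\piv$.
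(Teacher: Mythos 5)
Your proof is correct and follows essentially the same route as the paper's: expand the entries of $\bm{C}$, isolate the $k=j$ term (where the row-minimum hypothesis gives no control), dominate the $k\neq j$ terms using the row-minimality of column $j$ together with the column-sum ordering, and recombine so the diagonal term cancels and only the column-sum comparison $s_j^B - s_l^B$ remains. The only cosmetic differences are the sign convention (you bound $c_{ij}-c_{il}\leq 0$ where the paper shows $c_{it}-c_{ij}>0$) and your use of weak inequalities, which still yield the strict conclusion since $s_j^A>0$ for a positive matrix.
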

\begin{proof}
Let $\CC=\bm{A}\BB$, the column sum of column $j$ of $\CC$ can be expressed as:
\begin{equation}
\begin{aligned}
  \sum_i c_{ij} &= \sum_i \sum_k a_{ik}b_{kj}\\
  & = \sum_k (\sum_i a_{ik})b_{kj}
\end{aligned}
\end{equation}

for $t \neq j$, the column sum of $\CC$ is

\begin{equation}
\begin{aligned}
\sum_i c_{it} = &\sum_i \sum_k a_{ik}b_{kt}
\\
 = & \sum_k (\sum_i a_{ik})b_{kt}
\end{aligned}
\end{equation}
We first show that $j$-th column remains the lowest column sum in $\bm{C}$. For $t\neq j$:
\begin{equation}
\begin{aligned}
\sum_i c_{it}-\sum_i c_{ij} & = \sum_k (\sum_i a_{ik}) (b_{kt}-b_{kj})\\
& = \sum_{k\neq j} (\sum_i a_{ik}) (b_{kt}-b_{kj}) + (\sum_i a_{ij}) (b_{jt}-b_{jj})\\
& \overset{(i)}{>} \sum_{k\neq j} (\sum_i a_{ij}) (b_{kt}-b_{kj}) + (\sum_i a_{ij}) (b_{jt}-b_{jj})\\
& = (\sum_i a_{ij}) \left(\sum_{k\neq j}(b_{kt}-b_{kj}) +(b_{jt}-b_{jj})\right)\\
& = \sum_i a_{ij} \left(\sum_{k}b_{kt}-\sum_{k}b_{kj}\right) \\
& \overset{(ii)}{>} 0
\end{aligned}
\end{equation}

(i) holds because for $k\neq j$, $b_{kt}-b_{kj}>0$ and $\sum_i a_{ij}<\sum_i a_{ik}$ \\
(ii) holds because the $j$-th column has the lowest column sum in B

We then show that $(i,j)$-th entry remains the lowest value in $i$-th row of $\bm{C}$ for $i\neq j$. For $t \neq j$, we have
\begin{equation}
\begin{aligned}
c_{it}-c_{ij} = & \sum_k a_{ik}b_{kt} - \sum_k a_{ik}b_{kj}\\
= & \sum_{k\neq j} a_{ik}(b_{kt}-b_{kj}) +a_{ij}(b_{jt}-b_{jj})\\
\overset{(iii)}{>} & \sum_{k\neq j} a_{ij}(b_{kt}-b_{kj}) +a_{ij}(b_{jt}-b_{jj})\\
= & a_{ij} \left(\sum_{k\neq j} (b_{kt}-b_{kj}) + (b_{jt}-b_{jj})\right)\\
= & a_{ij}\left(\sum_k b_{kt} -\sum_k b_{kj}\right)\\
\overset{(iv)}{>} & 0
\end{aligned}
\end{equation}

(iii) holds since $b_{kt}-b_{kj}>0$ and $a_{ik}>a_{ij}$ for $i,k\neq j$.

(iv) holds because $\sum_k b_{kt} >\sum_k b_{kj}$
\end{proof}

For time-inhomogenous trust matrix, Assumptions~\ref{assump-covariate-shift}~\ref{assump-positivity of W} ensure the Markov chain update: $\psiv_i^{(t)}=\sum_j w_{ij}^{(t)}\psiv_j^{(t-1)}$, which is followed by consensus as proven in Theorem~\ref{Theorem-consensus-on-preds}. We see that $b$-th column remains the lowest column sum in the product $\Wv^{(\tau)} \Wv^{(\tau-1)}...\Wv^{(1)}$, by iteratively applying Proposition~\ref{time-homogenous-MC-lowest-importance}. For $t \geq \tau$, multiplying consensus with any row stochastic preserves the consensus.
Thus, the $b$-th column will remain to be the smallest column in the consensus. For the time-homogenous case, as long as $\Wv$ holds the same properties, one can easily verify that the same result still holds.
Thus, Proposition~\ref{claim-arbitrary low importance on b} is proved. 

\paragraph*{Extend to more than one node with low-quality data.} For more than one low-quality node, what are the desired properties (sufficient conditions) for the transition (trust) matrices to have? It turns out that apart from the two conditions in a single low-quality node case, we need an extra assumption. 

\begin{proposition}
\label{prop-many-low-quality-nodes}
Given Assumptions~\ref{assump-covariate-shift}~\ref{assump-positivity of W} and that all agents are over-parameterized, let $\cR$ be the set of indices of regular nodes, and $\cB$ be the set of indices of low-quality nodes, if for $t \leq \tau$, $\Wv^{(t)}$ satisfies the following conditions:
\begin{enumerate}[label=\roman*)]
\setlength\itemsep{0em}
\item  any regular node's column sum is larger than any low-quality node's: $ \min_{r \in \cR} \sum_{i} w^{(t)}_{ir} >\max_{b \in \cB} \sum_{i} w^{(t)}_{ib}$;

\item the gap between the sum of trust from regular nodes towards any regular node $r$ and low-quality node $b$ is larger than the gap between low-quality node $b$'s self-confidence and its trust towards the regular node: $\sum_{n\in \cR} (w^{(t)}_{nr}-w^{(t)}_{nb}) > (w^{(t)}_{bb}-w^{(t)}_{br})$, 

\item any node's trust towards a regular node is bigger or equal than its trust towards a low-quality node other than itself: for any $r \in \cR$ and any $b \in \cB$, we have $w^{(t)}_{nr} \geq w^{(t)}_{nb}$ holds as long as $n\neq b$.
\end{enumerate}
And after $t>\tau$, $\Wv^{(t)}=\1 \1^\top \frac{1}{N}$. Then we have nodes in $\cB$ having a lower importance in the consensus than nodes in $\cR$.
\end{proposition}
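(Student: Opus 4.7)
The plan is to generalize the single low-quality node argument (Proposition~\ref{claim-arbitrary low importance on b} and Appendix~\ref{appendix-proof-low-importance-on-b}) by propagating three joint invariants through the finite product of trust matrices. First, observe that for $t>\tau$ one has $\Wv^{(t)}=\1\1^{\top}/N$, and a direct calculation gives $(\1\1^{\top}/N)\,M = \tfrac{1}{N}\,\1\,(S^{M}_{1},\ldots,S^{M}_{N})$, where $M:=\Wv^{(\tau)}\cdots\Wv^{(1)}$ and $S^{M}_{j}:=\sum_{i}[M]_{ij}$; any further multiplication by $\1\1^{\top}/N$ preserves this identical-rows form. Hence the stationary distribution is $\piv=(S^{M}_{1},\ldots,S^{M}_{N})/N$, and it suffices to prove
\[
S^{M}_{r} \;>\; S^{M}_{b} \qquad \text{for every } r\in\cR,\; b\in\cB.
\]

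Fix a pair $(r,b)\in\cR\times\cB$. Writing $P_{t}:=\Wv^{(t)}\cdots\Wv^{(1)}$ and $\Delta_{k}:=[P_{t-1}]_{kr}-[P_{t-1}]_{kb}$, I propagate by induction on $t\in\{1,\dots,\tau\}$ the invariants
\begin{align*}
\text{(I)}\quad & S^{P_{t}}_{r} > S^{P_{t}}_{b},\\
\text{(II)}\quad & [P_{t}]_{nr} \geq [P_{t}]_{nb} \text{ for all } n\neq b,\\
\text{(III)}\quad & \textstyle\sum_{n\in\cR\cup\{b\}}[P_{t}]_{nr} \;>\; \sum_{n\in\cR\cup\{b\}}[P_{t}]_{nb}.
\end{align*}
Invariant (III) is the rearrangement of $\sum_{n\in\cR}([P_{t}]_{nr}-[P_{t}]_{nb})>[P_{t}]_{bb}-[P_{t}]_{br}$, so at $t=1$ the proposition's conditions (1), (3), (2) applied to $\Wv^{(1)}$ give exactly (I), (II), (III).

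For the inductive step I partition the sum over $k$ in $[P_{t}]_{ij}=\sum_{k}w^{(t)}_{ik}[P_{t-1}]_{kj}$ into $\cR$, $\cB\setminus\{b\}$, and $\{b\}$, and use the telescoping identity
\[
S^{P_{t}}_{r}-S^{P_{t}}_{b} \;=\; S^{W_{t}}_{b}\bigl(S^{P_{t-1}}_{r}-S^{P_{t-1}}_{b}\bigr) \;+\; \sum_{k\neq b}\bigl(S^{W_{t}}_{k}-S^{W_{t}}_{b}\bigr)\Delta_{k}.
\]
Terms with $k\in\cR$ are non-negative by condition (1) of $\Wv^{(t)}$ combined with invariant (II) of $P_{t-1}$; terms with $k\in\cB\setminus\{b\}$, the only ones of indeterminate-sign coefficient, admit the trivial lower bound $(S^{W_{t}}_{k}-S^{W_{t}}_{b})\Delta_{k}\geq -S^{W_{t}}_{b}\Delta_{k}$ valid since $S^{W_{t}}_{k}\geq 0$ and $\Delta_{k}\geq 0$. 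Absorbing this bound into the first term rewrites the right-hand side as $S^{W_{t}}_{b}\bigl(\widetilde S^{P_{t-1},b}_{r}-\widetilde S^{P_{t-1},b}_{b}\bigr)$ plus the non-negative $\cR$-contribution, and this is strictly positive by invariant (III) of $P_{t-1}$, establishing (I) for $P_{t}$. An identical manipulation with the partial column-sum weights $\widetilde S^{W_{t},b}_{k}:=\sum_{n\in\cR\cup\{b\}}w^{(t)}_{nk}$---for which condition (2) of $\Wv^{(t)}$ yields $\widetilde S^{W_{t},b}_{r}>\widetilde S^{W_{t},b}_{b}$---propagates (III). Invariant (II) is preserved row by row: condition (3) of $\Wv^{(t)}$ gives $w^{(t)}_{nk}\geq w^{(t)}_{nb}$ for all $k\in\cR$ and $n\neq b$, and combined with (II) and (III) of $P_{t-1}$ the same calculation delivers $[P_{t}]_{nr}-[P_{t}]_{nb}\geq w^{(t)}_{nb}\bigl(\widetilde S^{P_{t-1},b}_{r}-\widetilde S^{P_{t-1},b}_{b}\bigr)\geq 0$.

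The main obstacle, absent in the single low-quality node case, is that the proposition's hypotheses place no useful ordering between $S^{W_{t}}_{k}$ and $S^{W_{t}}_{b}$ (nor between $w^{(t)}_{nk}$ and $w^{(t)}_{nb}$) when both $k,b\in\cB$, so the ``wrong-sign'' contribution at $k=b$ cannot be dismissed locally as in the single-$b$ proof. The telescoping trick is precisely what converts this contribution into the partial column-sum gap $\widetilde S^{P_{t-1},b}_{r}-\widetilde S^{P_{t-1},b}_{b}$ controlled by invariant (III). This explains why condition (2) of the proposition must be stated as a gap inequality rather than an entrywise one, and why the three invariants have to be propagated jointly. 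Once (I) is established at $t=\tau$, the reduction at the start delivers $\pi_{r}>\pi_{b}$ for every $r\in\cR$ and $b\in\cB$.
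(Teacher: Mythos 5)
Your proof is correct and follows essentially the same route as the paper's: you propagate the proposition's three conditions as joint invariants of the running product $\Wv^{(t)}\cdots\Wv^{(1)}$, partition each column-sum/entry comparison over $\cR$, $\cB\setminus\{b\}$, and $\{b\}$, and use condition (2) to absorb the indefinite-sign diagonal term of the low-quality node, exactly as in Appendix~\ref{appendix-proof-low-importance-on-b}. The only differences are cosmetic --- your telescoping identity packages the same term-by-term bounds the paper applies directly, and you make explicit the final reduction showing $\piv$ is proportional to the column sums of $\Wv^{(\tau)}\cdots\Wv^{(1)}$, which the paper states more briefly.
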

\begin{proof}
First, let us look at the multiplication of two such matrices when $1 <t <\tau$, for any $r \in \cR$ and $b \in \cB$, we have conditions (1)(2)(3) remain to be true for the product $\Wv^{(t)}\Wv^{(t-1)}$. We will verify them one by one in the following part:

\paragraph{Verification of condition (1)}: any regular node's column sum is larger than any low-quality node's in $\Wv^{(t)}\Wv^{(t-1)}$. For any $r \in \cR$ and any $b \in \cB$, we have
\begin{align*}
& \sum_i \sum_n w^{(t)}_{in}w^{(t-1)}_{nr}-\sum_i \sum_n w^{(t)}_{in}w^{(t-1)}_{nb} \\ = & \sum_n(\sum_i w^{(t)}_{in})\left(w^{(t-1)}_{nr}-w^{(t-1)}_{nb}\right)\\
 = & \sum_{n \in \cR} (\sum_i w^{(t)}_{in})\left(w^{(t-1)}_{nr}-w^{(t-1)}_{nb}\right) +\sum_{n \in \cB\backslash \{b\}}(\sum_i w^{(t)}_{in})\left(w^{(t-1)}_{nr}-w^{(t-1)}_{nb}\right)\\
 + & (\sum_i w^{(t)}_{ib})\left(w^{(t-1)}_{br}-w^{(t-1)}_{bb}\right)\\
\overset{(i)}{>} & \sum_{n \in \cR}(\sum_i w_{ib}^{(t)} )\left(w^{(t-1)}_{nr}-w^{(t-1)}_{nb}\right) + \sum_i w^{(t)}_{ib}\left(w^{(t-1)}_{br}-w^{(t-1)}_{bb}\right) \\
+ & \sum_{n \in \cB\backslash \{b\}} (\sum_i w^{(t)}_{in})\left(w^{(t-1)}_{nr}-w^{(t-1)}_{nb}\right)\\
= & (\sum_i w_{ib}^{(t)} )\left( \sum_{n \in \cR} w^{(t-1)}_{nr}- \sum_{n \in \cR} w^{(t-1)}_{nb} + w^{(t-1)}_{br}-w^{(t-1)}_{bb}\right) \\
+ & \sum_{n \in \cB\backslash \{b\}} (\sum_i w^{(t)}_{in})\left(w^{(t-1)}_{nr}-w^{(t-1)}_{nb}\right)\\
\overset{(ii)}{>} & 0
\end{align*}

(i) holds because $\sum_i w^{(t)}_{in}$ for any $n\in \cR$ is larger than $\sum_i w^{(t)}_{ib}$ for any $b \in \cB$, which follows from condition (1), and $w_{nr}^{(t)}-w_{nb}^{(t)}>0$, which follows from condition (3).

(ii) holds following the conditions (2) and (3). From (2), $\sum_{n \in \cR} w^{(t-1)}_{nr}- \sum_{n \in \cR} w^{(t-1)}_{nb} + w^{(t-1)}_{br}-w^{(t-1)}_{bb} > 0$, and from (3), $w^{(t-1)}_{nr}\geq w^{(t-1)}_{nb}$ for $n\neq b$

\paragraph{Verification of condition (2)}:
\begin{align*}
&\sum_{n\in \cR}\left(\sum_p w_{np}^{(t)}w_{pr}^{(t-1)}-\sum_p w_{np}^{(t)}w_{pb}^{(t-1)}\right)-\left(\sum_p w_{bp}^{(t)}w_{pb}^{(t-1)}-\sum_p w_{bp}^{(t)}w_{pr}^{(t-1)}\right)\\
= & \sum_p \left(\sum_{n\in \cR} w_{np}^{(t)}+w_{bp}^{(t)}\right)w_{pr}^{(t-1)}-\sum_p \left(\sum_{n\in \cR} w_{np}^{(t)}+w_{bp}^{(t)}\right)w_{pb}^{(t-1)}\\
=& \sum_p \left(\sum_{n\in \cR} w_{np}^{(t)}+w_{bp}^{(t)}\right)\left(w_{pr}^{(t-1)}-w_{pb}^{(t-1)}\right)\\
= &\sum_{p\in \cR} \left(\sum_{n\in \cR} w_{np}^{(t)}+w_{bp}^{(t)}\right)\left(w_{pr}^{(t-1)}-w_{pb}^{(t-1)}\right)+\sum_{p\in \cB\backslash \{b\}} \left(\sum_{n \in \cR} w_{np}^{(t)}+w_{bp}^{(t)}\right)\left(w_{pr}^{(t-1)}-w_{pb}^{(t-1)}\right)\\
+& \left(\sum_{n \in \cR} w_{nb}^{(t)}+w_{bb}^{(t)}\right)\left(w_{br}^{(t-1)}-w_{bb}^{(t-1)}\right)\\
\overset{(iii)}{\geq} &\sum_{p\in \cR} \left(\sum_{n \in \cR} w_{nb}^{(t)}+w_{bb}^{(t)}\right)\left(w_{pr}^{(t-1)}-w_{pb}^{(t-1)}\right)+\left(\sum_{n \in \cR} w_{nb}^{(t)}+w_{bb}^{(t)}\right)\left(w_{br}^{(t-1)}-w_{bb}^{(t-1)}\right)\\
+ & \sum_{p\in \cB\backslash \{b\}} \left(\sum_{n \in \cR} w_{np}^{(t)}+w_{bp}^{(t)}\right)\left(w_{pr}^{(t-1)}-w_{pb}^{(t-1)}\right)\\
=& \left(\sum_{n \in \cR} w_{nb}^{(t)}+w_{bb}^{(t)}\right)\left(\sum_{p\in \cR} w_{pr}^{(t-1)}-\sum_{p\in \cR}w_{pb}^{(t-1)} +w_{br}^{(t-1)}-w_{bb}^{(t-1)}\right)\\
+ & \sum_{p\in \cB\backslash \{b\}} \left(\sum_{n \in \cR} w_{np}^{(t)}+w_{bp}^{(t)}\right)\left(w_{pr}^{(t-1)}-w_{pb}^{(t-1)}\right)\\
\overset{(iv)}{\geq} & 0
\end{align*}
(iii) holds because for $p$ a regular node, we have $\sum_{n\in \cR} w_{np}^{(t)}+w_{bp}^{(t)} > \sum_{n\in \cR} w_{nb}^{(t)}+w_{bb}^{(t)}$, which follows from condition (2), and $w_{pr}^{(t-1)}-w_{pb}^{(t-1)}\geq 0$ for $p\neq b$, following from condition (3).\\
(iv) holds because of conditions (2) and (3).

\paragraph{Verification of (3)}: for $n \neq b$, we want to show the trust towards a regular node $r$ is bigger than towards a low-quality node $b$, that is $\sum_p w_{np}^{(t)}w_{pr}^{(t)} > \sum_p w_{np}^{(t)}w_{pb}^{(t)}$

\begin{align*}
& \sum_p w_{np}^{(t)}w_{pr}^{(t)} - \sum_p w_{np}^{(t)}w_{pb}^{(t)} \\
= & \sum_{p \in \cR} w_{np}^{(t)}\left(w_{pr}^{(t)} - w_{pb}^{(t)}\right) + \sum_{p \in \cB\backslash \{b\}} w_{np}^{(t)}\left(w_{pr}^{(t)} - w_{pb}^{(t)}\right) + w_{nb}^{(t)}\left(w_{br}^{(t)} - w_{bb}^{(t)}\right)\\
\overset{(v)}{\geq} & \sum_{p \in \cR} w_{nb}^{(t)}\left(w_{pr}^{(t)} - w_{pb}^{(t)}\right) + w_{nb}^{(t)}\left(w_{br}^{(t)} - w_{bb}^{(t)}\right)+ \sum_{p \in \cB\backslash \{b\}} w_{np}^{(t)}\left(w_{pr}^{(t)} - w_{pb}^{(t)}\right)\\
= & w_{nb}^{(t)}\left(\sum_{p \in \cR}w_{pr}^{(t)} -\sum_{p \in \cR} w_{pb}^{(t)}+w_{br}^{(t)} - w_{bb}^{(t)}\right) + + \sum_{p \in \cB\backslash \{b\}} w_{np}^{(t)}\left(w_{pr}^{(t)} - w_{pb}^{(t)}\right)\\
\overset{(vi)}{\geq} & 0 
\end{align*}

(v) holds because for $n \neq b$, we have $w_{np}^{(t)}\geq w_{nb}^{(t)}$, following from condition (3), and $w_{pr}^{(t)}-w_{pb}^{(t)}\geq0$ for $p\neq b$.

(vi) holds following from conditions (2) and (3).

It follows that in the product $\Wv^{(\tau)}\Wv^{(\tau-1)}...\Wv^{(1)}$, a low-quality node will still have a lower column sum than any regular node. Because conditions (1)(2)(3) holds for any product of $\Wv^{(t)}$'s as long as each of the $\Wv^{(t)}$ share the conditions listed by (1)(2)(3).

After $t>\tau$, multiplying with a naive weight matrix does not change the column sum order, we will have all low-quality nodes have lower importance in the consensus than the regular nodes.

\end{proof}

\section{Reasoning for confidence weighting factor $\beta_i^{(t)}$}
\label{appendix-proof-entropy-weighting}
In this section, we justify our choice of $\beta_i^{(t)}(\xx)$ in Section~\ref{section-confidence-weighting}, i.e. we show via adding such a term, we are able to downweight a regular node's trust towards a bad node. \\
$\Phiv^{(t)}$ is a row-normalized pairwise cosine similarity matrix, with $(i,j)$-th entry \emph{before} row normalization as
\begin{equation}
\frac{1}{n^\star} \sum_{\xx' \in \XSv}\frac{\left\langle \ff_{\thetav_i^{(t-1)}}(\xx'),\ff_{\thetav_j^{(t-1)}}(\xx')\right\rangle}{\|\ff_{\thetav_i^{(t-1)}}(\xx')\|_2\|\ff_{\thetav_j^{(t-1)}}(\xx')\|_2}
\end{equation}

After adding a $\beta_i^{(t)}(\xx)=\nicefrac{1}{\mathcal{H}(\ff_{\thetav_i^{(t-1)}}(\xx))}$, we have $\Wv^{(t)}$ with $(i,j)$-th entry \emph{before} row normalization as
\begin{equation}
\frac{1}{n^\star} \sum_{\xx' \in \XSv}\frac{1}{\mathcal{H}(\ff_{\thetav_i^{(t-1)}}(\xx'))}\frac{\left\langle \ff_{\thetav_i^{(t-1)}}(\xx'),\ff_{\thetav_j^{(t-1)}}(\xx')\right\rangle}{\|\ff_{\thetav_i^{(t-1)}}(\xx')\|_2\|\ff_{\thetav_j^{(t-1)}}(\xx')\|_2}
\end{equation}

We want to show that the weighting scheme down-weights a regular node $i$'s trust towards a low-quality node $b$, that is \[\phi_{ib}^{(t)}>w_{ib}^{(t)}\]

As the comparison is made with respect to the same time step $t$, we drop the $t$ notation from now on. Let $\{a_{0},..,a_{N-1}\}$ be the cosine similarity between a regular agent $i$ and others inside agent $i$'s confident region, and $\{b_{0},..,b_{N-1}\}$ be the cosine similarity between $i$ and others outside agent $i$'s confident region. By confident region, we mean region with low entropy in class probabilities, i.e. the model is more sure about the prediction. Further, we make the following assumptions:
\begin{enumerate}[label=\alph*)]
\setlength\itemsep{0em}
\item for $\xx'$ in agent $i$'s confident region, we have low entropy of predicted class probabilities: $\mathcal{H}(\ff_{\thetav_i^{(t-1)}}(\xx'))=1/c_1$; while for $\xx'$ outside agent $i$'s confident region, we have $\mathcal{H}(\ff_{\thetav_i^{(t-1)}}(\xx'))=1/c_2$. We further assume $0<c_2 <c_1$.

\item inside a regular node $i$'s confident region, $i$ has a better judgment of the alignment score produced by cosine similarity, such that the cosine similarity with low quality $b$ is weighted lower inside:\begin{equation}
\label{assump-confidence-weighting-block-2}
\frac{a_b}{\sum_j a_j} < \frac{b_b}{\sum_j b_j}
\end{equation}
\end{enumerate}
to claim $w_{ib}<\phi_{ib}$, we need to show 
\begin{equation}
\frac{c_1 a_b+c_2 b_b}{\sum_j (c_1 a_j+c_2 b_j)} < \frac{a_b+ b_b}{\sum_j (a_j+b_j)}
\end{equation}

\begin{proof}
Re-arrange Equation~\ref{assump-confidence-weighting-block-2}, we get
\begin{equation}
b_b\sum_j  a_j >  a_b \sum_j b_j
\end{equation}
Multiply with $c_2-c_1$ on both sides, we have 
\begin{equation}
(c_2 -c_1) b_b\sum_j  a_j <  (c_2-c_1) a_b \sum_j b_j
\end{equation}
\begin{equation}
c_2 b_b\sum_j  a_j+ c_1 a_b\sum_j b_j  < 
c_1 b_b\sum_j  a_j  + c_2 a_b \sum_j  b_j
\end{equation}
Now add $c_1 a_b\sum_j a_j + c_2 b_b\sum_j b_j$ to both sides, we have
\begin{equation}
\begin{aligned}
c_1 a_b\sum_j  a_j+c_2 b_b\sum_j  a_j+ c_1 a_b\sum_jb_j + c_2 b_b\sum_jb_j < \\
c_1 a_b \sum_j  a_j +c_1 b_b\sum_j  a_j  + c_2 a_b \sum_j  b_j+c_2 b_b \sum_j  b_j
\end{aligned}
\end{equation}
Combining the terms we have 
\begin{equation}
(c_1 a_b+c_2 b_b)\left(\sum_j  (a_j+b_j)\right) < 
\left(\sum_j (c_1 a_j+c_2 b_j)\right) (a_b+b_b)
\end{equation}
following which, we directly have
\begin{equation}
\frac{c_1 a_b+c_2 b_b}{\sum_j (c_1 a_j+c_2 b_j)} < \frac{a_b+ b_b}{\sum_j  (a_j+b_j)}
\end{equation}
\end{proof}

\section{Complementary details}

\subsection{Details regarding model training}
All the model training was done using a single GPU (NVIDIA Tesla V100). For each local iteration, we load local data and shared unlabeled data with batch size 64 and 256 separately. We empirically observed that a larger batch size for unlabeled data is necessary for the training to work well. The optimizer used is Adam with a learning rate 5e-3. For Cifar10 and Cifar100, as the base model is not pretrained, we do 50 global rounds with 5 local training epochs for each agent per global round. For Fed-ISIC-2019 dataset, as the base model is pretrained EfficientNet, we do 20 global rounds. For the first 5 global rounds, we set $\lambda=0$ to arrive at good local models, such that every agent can evaluate trust more fairly. After that, $\lambda$ is fixed as 0.5. \emph{Dynamic} trust is computed after each global round, while \emph{static} trust denotes the utilization of the initially calculated trust value throughout the whole experiment.

For Cifar10 and Cifar100, we use 5\% of the whole dataset to constitute $\XSv$, where each class has equal representation. For the rest, we spread them into 10 clients using Dirichlet distribution with $\alpha=1$. For Fed-ISIC-2019 dataset, we follow the original splits as in \citet{terrail2022flamby}, and we let each client contribute 50 data samples to constitute $\XSv$. 

We employ a fixed $\lambda$ for all our experiments. To select $\lambda$, we randomly sample 10\% of the full Cifar10 dataset, which we then split into local training data (95\%) and $\XSv$ (5\%). The local training data is then spread into 10 clients using Dirichlet distribution with $\alpha=1$. The test global accuracy and value of $\lambda$ is plotted out in Fig.~\ref{fig:lam_vs_acc}. We thus choose $\lambda=0.5$ for all our experiments, and it is always able to give stable performances according to our experiments.

\begin{figure}[h]
\centering
\includegraphics[width=0.3\textwidth]{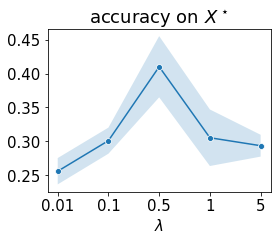}
\caption{$\lambda$ versus algorithm performance}
\label{fig:lam_vs_acc}
\end{figure}

\end{document}